\theoremstyle{plain}
\newtheorem{theorem}{Theorem}[section]
\newtheorem{lemma}[theorem]{Lemma}
\newtheorem{corollary}[theorem]{Corollary}
\newtheorem{fact}[theorem]{Fact}
\theoremstyle{definition}
\newtheorem{definition}[theorem]{Definition}
\theoremstyle{remark}
\newtheorem{remark}[theorem]{Remark}
\newcommand{\OPT}{\operatorname{OPT}}
\newcommand{\cC}{\mathcal{C}}
\newcommand{\R}{\mathbb{R}}
\newcommand{\conv}{\mathrm{conv}}
\newcommand{\E}{\operatorname{\mathbb{E}}}
\newcommand{\PP}{\operatorname{\mathbb{P}}}
\newcommand{\prob}[1]{\PP \left[ #1 \right]}
\newcommand{\1}{\mathds{1}}
\newcommand{\cA}{\mathcal{A}}
\newcommand{\cI}{\mathcal{I}}
\newcommand{\cJ}{\mathcal{J}}
\newcommand{\cF}{\mathcal{F}}
\newcommand{\cG}{\mathcal{G}}
\DeclarePairedDelimiter{\floor}{\lfloor}{\rfloor}
\DeclarePairedDelimiter{\ceil}{\lceil}{\rceil}
\DeclareMathOperator*{\argmin}{argmin}
\newcommand{\firstpass}{\textsc{Fair-Reservoir}\xspace}
\newcommand{\secondpass}{\textsc{Randomized-Fair-Streaming}\xspace}
\newcommand{\jtodo}[1]{\todo[color=blue!30, inline]{\emph{Jakub}: #1}}
\newcommand{\mtodo}[1]{\todo[color=blue!30, inline]{\emph{Marwa}: #1}}
\renewcommand{\cite}[1]{\citep{#1}}
\begin{document}

\title{Fairness in Submodular Maximization over a Matroid Constraint}

\author[1]{Marwa El Halabi}
\author[2]{Jakub Tarnawski}
\author[3]{Ashkan Norouzi-Fard}
\author[4]{Thuy-Duong Vuong}

\affil[1]{Samsung - SAIT AI Lab, Montreal}
\affil[2]{Microsoft Research}
\affil[3]{Google Zurich}
\affil[4]{Stanford University}

\date{}
\maketitle

\begin{abstract}
Submodular maximization over a matroid constraint is a fundamental problem with various applications in machine learning. Some of these applications involve decision-making over datapoints with sensitive attributes such as gender or race. In such settings, it is crucial to guarantee that the selected solution is fairly distributed with respect to this attribute.
Recently, fairness has been investigated in submodular maximization under a cardinality constraint in both the streaming and offline settings, however the more general problem with matroid constraint has only been considered in the streaming setting and only for monotone objectives. This work fills this gap. We propose various algorithms and impossibility results offering different trade-offs between quality, fairness, and generality.
\end{abstract}
\section{Introduction}


Machine learning algorithms are increasingly used in decision-making processes.
This can potentially lead to the introduction or perpetuation of bias and discrimination in automated decisions. Of particular concern are sensitive areas such as education, hiring, credit access, bail decisions, and law enforcement~\cite{executive2016,blueprint22,reportEU22}.
There has been a growing body of work
attempting to mitigate these risks
by
developing \emph{fair} algorithms
for fundamental problems
including
classification \cite{ZafarVGG17},  ranking \cite{CelisSV18}, clustering \cite{Chierichetti0LV17}, voting \cite{CelisHV18}, matching \cite{Chierichetti0LV19}, influence maximization \cite{TsangWRTZ19}, data summarization \cite{CelisKS0KV18}, and many others. 

In this work, we address fairness in the fundamental problem of submodular maximization over a matroid constraint, in the offline setting.
Submodular functions model a diminishing returns property that naturally occurs in a variety of machine learning problems
such as active learning \cite{GolovinK11}, data summarization \cite{LinB11}, feature selection \cite{DasK11}, and recommender systems \cite{El-AriniG11}.
Matroids represent a popular and expressive notion of independence systems that encompasses a broad spectrum of useful constraints, e.g.~cardinality, block cardinality, linear independence, and connectivity constraints.

Several definitions of algorithmic fairness have been proposed in the literature,
but no universal metric exists.
Here we adopt the common notion used in many prior studies \cite{CelisHV18, CelisKS0KV18, CelisSV18,Chierichetti0LV17,Chierichetti0LV19}
that requires a solution to be \emph{fair} with respect to a sensitive attribute such as race or gender.
Formally, given a set $V$ of $n$ items, each item is associated with a color $c$ representing a sensitive attribute. Let $V_1, \ldots, V_C$ denote the corresponding $C$ \emph{disjoint} groups of items of the same color.
A set $S \subseteq V$ is \emph{fair} if $\ell_c \leq |S \cap V_c| \leq u_c$ for all $c$, for some chosen lower and upper bounds $\ell_c, u_c \in \mathbb N$.
This notion subsumes several other fairness definitions, e.g.~diversity rules \cite{cohoon2013, biddle2006adverse}, statistical parity \cite{dwork2012fairness}, 
or proportional representation rules \cite{monroe1995fully, Brill2017}.
See \citet[Section 4]{CelisHV18} for a more detailed overview.

Without fairness, the problem of maximizing a submodular function over a matroid constraint in the offline setting has been studied extensively. For monotone objectives, a tight $(1-1/e)$
-approximation is known  \cite{calinescu2011maximizing,Feige98}, while for non-monotone objectives the best known approximation is $0.401$ given in \cite{Buchbinder2023}.  An information-theoretic hardness of $0.478$ was also shown for the non-monotone case in \cite{GharanV11}.

Fair submodular maximization has been considered under a \emph{cardinality} constraint,  in both the offline and streaming models. 
\citet{CelisHV18} presented a tight $(1-1/e)$-approximation to the problem in the offline setting with monotone objectives. 
In the streaming setting,
\citet{HalabiMNTT20} gave a one-pass $0.3178$-approximation algorithm in the monotone case, and a one-pass $0.1921 q$-approximation algorithm in the non-monotone case, where $q = 1 - \max_c \tfrac{\ell_c}{|V_c|}$.\footnote{
These approximations are better than the ones stated in \cite{HalabiMNTT20}. They follow from their results by plugging in the state-of-the-art approximations of \citet[Theorem 1 and 5]{Feldman2021a} for streaming submodular maximization over a matroid constraint.} 
They also showed that achieving a better than $q$-approximation in this case requires $\Omega(n)$ memory.

Fair submodular maximization under a general \emph{matroid} constraint was only studied in the streaming setting, and only for monotone objectives. \citet{el2023fairness} provided a one-pass $1/2$-approximation  algorithm that uses time and memory exponential in the rank of the matroid; this factor is tight  \cite{FeldmanNSZ20}.
They also gave a polynomial-time two-pass $1/11.656$-approximation
which uses $O(k \cdot C)$ memory and violates the fairness lower bounds by a factor~$2$. 

While some applications necessitate algorithms that use small memory compared to the data size, others can afford a larger memory budget. 
In this work, we focus on the classic offline setting, which has not yet been studied despite being more basic than streaming.
Our results show that it is possible to obtain stronger performance guarantees in this case. 
We also consider both monotone and non-monotone objectives, which cover a wider range of applications.

\subsection{Our contributions}
We present several algorithms and impossibility results for fair matroid submodular maximization (FMSM) in the offline setting, with trade-offs between quality, fairness, and generality. 
\Cref{results-summary} summarizes our approximation results and prior ones for this problem.

\vspace{-1.2pt}
\begin{table*}
    \centering
     \caption{Summary of results for FMSM in the offline setting (only  polynomial-time algorithms with respect to $n, k$ and $C$ are included). In the ``Fairness Approx." column, a $(\theta_1, \theta_2)$ entry means the algorithm's output satisfies $|S \cap V_c| \in [\theta_1 \ell_c, \theta_2 u_c]$. The parameters $\epsilon>0$, $\beta \in [0,1/2]$ are arbitrary constants, and $r =  \min_{x \in  P_\cF  \cup (\1 - P_\cF)} \| x \|_\infty$.}
    \label{results-summary}
    \begin{adjustbox}{width=1\textwidth}
    \begin{tabular}{cccc}
    \toprule
       \bf Function  &     \bf Matroid &     \bf Approx. Ratio &      \bf Fairness Approx.  \\ \midrule
       \midrule
        Monotone  & General &  $1 - 1/e$ (Thm. \ref{approx-violated-fairness}) & $(1,1)$ in expectation \\
       \cmidrule{1-3}
       Non-monotone  & General & $(1 - r - \epsilon)/4$ (Thm. \ref{approx-violated-fairness}) & $(1 - \sqrt{3{\ln(2 C)}/{\ell_c}}, 1 + \sqrt{3{\ln(2C)}/{u_c}})$  \\   \cmidrule{1-4}
        Monotone &  General & $1/(4+\epsilon)$ \cite{el2023fairness} & $(1/2, 1)$  \\    \cmidrule{1-4}
        Non-monotone &  General & $(1 - \beta)/(8 + \epsilon)$ (Thm. \ref{approx-constant-violation}) &  $(\beta, 1)$  \\     \cmidrule{1-4}
        Monotone & Uniform & $1 - 1/e$ \citep{CelisHV18} & $(1,1)$  \\     \cmidrule{1-4} 
        Non-monotone & Uniform & $0.401 ( 1 - r)$ (Thm. \ref{approx-unif}) & $(1,1)$  \\     \cmidrule{1-4}
        Monotone decomposable & General & $1 - 1/e$ (Thm. \ref{approx-decomposable}) & $(1,1)$  \\     \cmidrule{1-4}
        Non-monotone decomposable & General & $(1 - r - \epsilon)/4$ (Thm. \ref{approx-decomposable}) & $(1,1)$  \\
        \bottomrule
    \end{tabular}
    \end{adjustbox}
\end{table*}

First, we observe that the streaming algorithms of \citet{el2023fairness} for monotone objectives apply in the offline setting. This yields a $1/2$-approximation in \emph{exponential time} with respect to the rank of the matroid. And, with a slight modification, the two-pass algorithm therein achieves a $1/(4+\epsilon)$-approximation in polynomial time, with a factor-$2$ violation of the fairness lower bounds.
We extend the latter result to non-monotone objectives, obtaining a $(1 - \beta)/(8 + \epsilon)$-approximation with a $1/\beta$ violation of the fairness lower bounds, for any $\beta \in [0,1/2]$ (\cref{approx-constant-violation}). 

It is of course preferable to obtain algorithms that satisfy the fairness constraints exactly.
However, we give strong evidence that this is very challenging in general.
A commonly used approach for submodular maximization problems is \emph{relax-and-round},
which consists of first solving a continuous relaxation of the problem obtained using the \emph{multilinear extension} (see  definition in \cref{multilinear-ext}) of the objective over the convex hull $P_\cF$ of the domain, then rounding the fractional solution.
Most 
studied submodular maximization problems
that admit a constant-factor approximation algorithm
also admit one based on the relax-and-round approach.
Unfortunately,
we show that for FMSM,
this approach cannot yield a better than $O(1/\sqrt{n})$-approximation in general (\cref{integrality-gap}),
even for monotone objectives.
 
Yet, 
if we allow the fairness constraint to be satisfied only \emph{in expectation}, we can adapt  this approach to obtain a tight $(1 - 1/e)$-approximation in expectation for monotone objectives. For non-monotone objectives, the approximation becomes $(1 -  r - \epsilon)/4$ where $r$ is the minimum $\ell_\infty$-norm of any vector in $P_\cF$ or its complement. Both algorithms are also guaranteed to only violate the fairness constraints by a multiplicative factor expected to be small in practice (\cref{approx-violated-fairness}).  

In fact,
the factor $r$ is closely tied to the hardness of non-monotone FMSM.
We show that no algorithm can achieve a better than $(1 - r)$-approximation in sub-exponential time, even without the matroid constraint (\cref{non-monotone-hardness}). In the case of uniform or no matroid, $1-r$ coincides with the notion of excess ratio $q= 1 - \max_c \tfrac{\ell_c}{|V_c|}$ studied by \citet{HalabiMNTT20}. As such, this inapproximability result extends the one provided therein for the streaming setting.

While the $(1-r)$ factor in the approximation for non-monotone FMSM cannot be improved even for the uniform matroid, we show that the $1/4$ factor can be improved to $0.401$ in this case (\cref{approx-unif}). This improves over the $0.1921 q$-approximation of \cite{HalabiMNTT20} in the streaming setting.

Finally, we study an interesting subclass of FMSM, which we call decomposable FMSM, where the objective decomposes into submodular functions on the equivalence classes of the matroid (see \cref{equiv-classes}) or on the color groups of the fairness constraint. This subclass enables us to consider  fairness in the important \emph{submodular welfare problem} (see \cref{SWP}).  We provide a tight $(1 - 1/e)$-approximation to decomposable FMSM for monotone objectives and a $(1 -  r - \epsilon)/4$ for non-monotone objectives, in expectation, \emph{without} violating the fairness constraint (\cref{approx-decomposable}).


\subsection{Additional related work}

Before this paper, non-monotone FMSM was only studied in the offline setting in the special case of cardinality constraint
by \citet{yuan2023group}, who obtained a $0.2005$-approximation for a specific setting of fairness bounds where $\tfrac{\ell_c}{|V_c|} = a$ and $\tfrac{u_c}{|V_c|} = b$ for all $c$, for some constants $a, b \in [0,1]$.\footnote{\label{remark-Yuan} The approximation given in \cite{yuan2023group} is $0.401/2$ for $a \leq 1/2$ and $0.401/3$ otherwise. However, their analysis can be easily modified to show $0.401/2 = 0.2005$-approximation for any $a$.} Our results recover their guarantee (see \cref{approx-unif} and the discussion below it).

Several works have studied other special cases and variants of FMSM. 
\citet{wang2021fair} studied a special case of monotone FMSM in the streaming setting, where fairness lower and upper bounds are equal for each color, i.e., $\ell_c = u_c$ for all $c$ and without any matroid constraint. 
\citet{tang2023achieving} and \citet{tang2023beyond} studied a randomized variant of FMSM with cardinality constraint in the offline setting, where fairness constraint only needs to be satisfied in expectation.  

A closely related problem to FMSM is submodular maximization over two matroid constraints; FMSM reduces to this problem when $\ell_c = 0$ for all $c$.
\citet{Lee2010} gave a $1/(2+\epsilon)$-approximation for this problem for monotone objectives, and a $1/(4 + \epsilon)$-approximation for non-monotone objectives.

An important special case of decomposable FMSM arises by considering fairness in the {submodular welfare problem}. Without fairness, this is a well-studied problem for which a tight $(1-1/e)$-approximation is known \cite{Vondrak2008, Khot2005}. The submodular welfare problem  
 has many practical applications, including resource allocation in wireless networks, budget allocation in advertising campaigns, and  division of resources among multiple stakeholders. 
In several of these applications, it is important to ensure that the selected allocation is fair. Fairness in the submodular welfare problem has been studied by \citet{Benabbou2021}, \citet{Aziz2023}, and \citet{Sun2023}.
However, these works consider different notions of fairness than we do: fairness is imposed on the {value} received by each individual in the selected allocation, instead of the size of the allocation assigned to each group of individuals. 

In this work, we focus on the case where the color groups are disjoint. The case where groups can overlap was considered in \cite{CelisHV18} for the special case of monotone FMSM with cardinality constraint.  They show that when elements can be assigned to 3 or more colors, even determining feasibility is NP-hard. But if fairness constraints are allowed to be violated, they give a $(1 - 1/e - o(1))$-approximation algorithm, which satisfies the fairness constraint in \emph{expectation}. Our result for general monotone FMSM (\cref{approx-violated-fairness}) is based on the same tools used for this result.

Fairness in submodular maximization has also been studied under a different notion of fairness \cite{Wang2024, TsangWRTZ19, tang2023beyond} which imposes a lower bound on the value that each group derives from the solution, where groups are not necessarily subsets of $V$, and the value is modeled by a monotone submodular function.  
Both the variant of fair submodular maximization considered in this line of work and our work can be formulated as variants of multi-objective submodular maximization problems \cite{Krause2008, Udwani2018,Chekuri2010}; see \cref{sec:multiobj-appendix} for details.

\section{Preliminaries}\label{sec:prelim}

    Consider a ground set $V$ with $n$ items and a non-negative set function $f:2^V \to \R_+$. 
   We say that $f$ is \emph{submodular} if 
   $f(Y \cup \{e\}) - f(Y) \ge f(X \cup \{e\}) - f(X)$
   for any two sets $Y \subseteq X \subseteq V$ and any element $e\in V \setminus X$. Moreover, $f$ is  \emph{monotone} if $f(Y) \leq f(X)$ for any two sets $Y \subseteq X \subseteq V$,
   and \emph{non-monotone} otherwise. Throughout the paper, we assume that $f$ is given as a value oracle that computes $f(S)$ given $S \subseteq V$. 
    \paragraph{Matroids.}\label{matroids}
    A \emph{matroid} is a family of sets $\cI \subseteq 2^V$ 
    with the following properties:
    \begin{itemize}
        \item 
            if $X \subseteq Y$ and $Y \in \cI$, then $X \in \cI$; 
        \item    
            if $X, Y \in \cI$ with $|X| < |Y|$, then there exists $e \in Y$ such that $X + e \in \cI$.
    \end{itemize}
    We use $X + e$ for $X \cup \{e\}$. 
    We assume that the matroid is given as an independence oracle. We call a set $X \subseteq V$ \emph{independent} if $X \in \cI$ and a {\em base} if it is also maximal with respect to inclusion. All the bases of a matroid share the same cardinality~$k$, referred to as the {\em rank} of the matroid.
    %
    \emph{Partition matroids} are an important class of matroids where $V$ is partitioned into $\bigcup_i G_i$, where each block $G_i$ has an upper bound $k_i \in \mathbb{N}$, and a set $X$ is independent if $|X \cap G_i| \leq k_i$ for all $i$.
    
\looseness=-1 \paragraph{Fair Matroid Submodular Maximization (FMSM).}
  Let $V$ be partitioned into $C$ sets, where $V_c$ denotes items of color $c$ for all $c \in [C] = \{1,\cdots,C\}$. Given fairness bounds $(\ell_c,u_c)_{c \in [C]}$ and a matroid $\cI \subseteq 2^V$ of rank $k$, the collections of \emph{fair} sets $\cC$ and \emph{feasible} sets $\cF$ are defined as:
  \vspace{-.5pt}
   \begin{align*}
        \cC &= \{S \subseteq V \mid  \ell_c \leq |S \cap V_c| \leq  u_c \; \ \forall c \in [C]\}\,, \\
        \cF &= \cI \cap \cC \,.
   \end{align*}
    In FMSM, the goal is to find a set $S \in \cF$ that maximizes $f(S)$. We use $\OPT$ to refer to the optimal value, i.e., $\OPT = \max_{S \in \cF} f(S)$. We assume a feasible solution exists, i.e., $\cF \neq \emptyset$. An algorithm is an $\alpha$-approximation to FMSM  if it outputs a set $S \in \cF$ such that $ f(S) \ge \alpha \cdot \OPT$.

We denote the convex hull of indicator vectors of fair sets by $P_\cC = \conv(\{ \1_S \mid S \in \cC\})$ (fairness polytope),
of independent sets by $P_\cI = \conv(\{ \1_S \mid S \in \cI \})$ (matroid polytope),
and of feasible sets by $P_\cF = \conv(\{ \1_S \mid S \in \cF\})$ (feasible polytope). We also define the complement of $P_\cF$ as $\1 - P_\cF = \{x \in [0,1]^n \mid \1 - x \in P_\cF \}$,
where $\1$ is the all-ones vector.
We recall in the following lemma some useful properties shown by \citet[Lemma C.1, Corollary C.4, Theorem C.5]{el2023fairness}.
\begin{lemma}\label{polytopes-properties}
The following hold:
\begin{enumerate}[label=\alph*., ref=\alph*] 
    \item $P_\cC = \{x \in [0,1]^n : \sum_{i \in V_c} x_i \in [\ell_c, u_c] \;  \forall c \in [C]\}$.
    \item $P_\cF$ corresponds to the intersection of the matroid and fairness polytopes: $P_\cF = P_\cI \cap P_\cC$.
    \item \label{linear-opt} $P_\cF$ is \emph{solvable}, i.e., linear functions can be maximized over it in polynomial time. 
\end{enumerate}
\end{lemma}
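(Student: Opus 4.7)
The plan is to prove the three parts in order.

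For part (a), denote the right-hand side polytope by $Q$. The inclusion $P_\cC \subseteq Q$ is immediate: every $\1_S$ with $S \in \cC$ satisfies all the constraints defining $Q$, and $Q$ is convex. For the reverse inclusion, I would exploit the product structure: since the $V_c$ partition $V$, we have $Q = \prod_{c \in [C]} Q_c$ with $Q_c = \{y \in [0,1]^{V_c} : \ell_c \le \sum_{i \in V_c} y_i \le u_c\}$. Each factor $Q_c$ is the convex hull of indicator vectors of subsets of $V_c$ with cardinality in $[\ell_c, u_c]$, a standard consequence of total unimodularity of its constraint matrix (identity rows plus a single all-ones row). Since the Cartesian product of integer polytopes is integer, the vertices of $Q$ are indicators of fair sets, hence $Q = P_\cC$.

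For part (b), the inclusion $P_\cF \subseteq P_\cI \cap P_\cC$ is again immediate by convexity. For the reverse, I would invoke matroid-polytope theory. First, $P_\cI$ is a polymatroid polytope. Second, $P_\cC$ is a generalized polymatroid: the upper-bound constraints $x(V_c) \le u_c$ together with $x \ge 0$ define a submodular upper function, the lower-bound constraints $x(V_c) \ge \ell_c$ together with $x \le \1$ define a supermodular lower function, and the resulting pair satisfies the cross-inequality defining a g-polymatroid precisely because the $V_c$ are pairwise disjoint and the bounds are integer. By Frank's theorem on the intersection of a polymatroid with a g-polymatroid, the intersection has only integer vertices when both defining functions are integer-valued. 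Every such integer vertex is then the indicator of a set lying simultaneously in $\cI$ and in $\cC$, i.e.\ in $\cF$, yielding $P_\cI \cap P_\cC \subseteq P_\cF$. A self-contained alternative is a direct uncrossing argument on the tight constraints at an arbitrary vertex $x^*$ of $P_\cI \cap P_\cC$: the matroid-side tight sets uncross to a chain, the fairness-side tight sets $V_c$ are pairwise disjoint, so together with the box constraints the defining submatrix is totally unimodular and $x^*$ is forced to be integer.

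For part (c), I would appeal to the Gr\"otschel--Lov\'asz--Schrijver equivalence of separation and optimization. Separation over $P_\cI$ is polynomial-time given an independence oracle (via submodular function minimization on the rank function), and separation over $P_\cC$ is trivial since it is described by $O(n + C)$ explicit linear inequalities. Hence separation over $P_\cF = P_\cI \cap P_\cC$ is polynomial-time, and so is linear optimization via the ellipsoid method; by part~(b), the resulting optimum can even be taken integral.

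The hard part will be part (b): although $P_\cI$ and $P_\cC$ are each integer, their intersection is not automatically so, and one must either invoke a nontrivial integrality theorem (matroid or g-polymatroid intersection) or carry out the uncrossing vertex analysis by hand. Parts (a) and (c) reduce to classical facts once the right framing is identified.
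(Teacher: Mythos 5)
Your proposal is correct, but note that the paper does not actually prove this lemma: it is explicitly imported from \citet[Lemma C.1, Corollary C.4, Theorem C.5]{el2023fairness}, so there is no in-paper argument to compare against line by line. Judged on its own, your derivation is sound and follows the standard route one would expect the cited source to take. For (a), the product decomposition over the disjoint $V_c$ plus total unimodularity of each factor (a laminar family $\{\{i\}\}_{i\in V_c}\cup\{V_c\}$) is exactly right; the only implicit hypothesis is $\ell_c\le |V_c|$ for each $c$, which is guaranteed by the paper's standing assumption $\cF\neq\emptyset$. For (b), viewing $P_\cC$ as an integral g-polymatroid with border pair $b(S)=\sum_c\min(|S\cap V_c|,u_c)$, $p(S)=\sum_c\max(\ell_c-|V_c\setminus S|,0)$ and invoking Frank's integrality theorem for the intersection with the (g-)polymatroid $P_\cI$ is a legitimate and complete argument; you correctly isolate disjointness of the color groups as the place where the cross-inequality (and hence integrality) would otherwise fail, consistent with the paper's remark that overlapping groups make even feasibility NP-hard. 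The uncrossing alternative you sketch also works, since tight matroid constraints uncross to a chain and a chain together with a partition is cross-free. For (c), the separation-to-optimization route via Gr\"otschel--Lov\'asz--Schrijver is valid given the independence oracle; a more combinatorial alternative (closer in spirit to how such solvability claims are usually established, and to the reduction in \citet[Appendix C.2]{el2023fairness}) is to reduce linear optimization over $P_\cF$ to weighted matroid intersection after absorbing the lower bounds into a second matroid on extendable sets, which avoids the ellipsoid method. In short: no gap, and your proof supplies details the paper deliberately omits.
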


    The following lemma from \citep[Lemma 2.2]{Buchbinder2014} will be useful in the non-monotone case.
    \begin{lemma}\label{bounded-sampling}
    Let $g$ be a non-negative submodular function and let $B$ be a random subset of $V$ containing every element of $V$ with probability at most $p$ (not necessarily independently). Then $\E[g(B)] \geq (1 - p)g(\emptyset)$.
    \end{lemma}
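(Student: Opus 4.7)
The plan is to reduce the claim to a bound on the Lov\'asz extension of $g$. Let $x_e = \PP[e \in B]$; by hypothesis, $\|x\|_\infty \leq p$.

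My first step invokes the classical identity that, for any submodular set function $g$, the \emph{convex closure} of $g$ coincides with the Lov\'asz extension
\[
\hat g(x) \;=\; \int_0^1 g\rb{\{e \in V : x_e \geq \lambda\}}\,d\lambda.
\]
Since the convex closure equals, by definition, the pointwise minimum of $\E_{B' \sim \mu}[g(B')]$ over all distributions $\mu$ on $2^V$ with marginals $x$, the distribution of $B$ in particular satisfies $\E[g(B)] \geq \hat g(x)$. This is the only step that uses submodularity.

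The second step is an elementary integral manipulation. Since $\|x\|_\infty \leq p$, the superlevel set $\{e : x_e \geq \lambda\}$ is empty for every $\lambda > p$. Using $g \geq 0$ to discard the (non-negative) contribution on $[0,p]$ and then plugging in $g(\emptyset)$ on $(p,1]$ gives
\[
\hat g(x) \;\geq\; \int_p^1 g\rb{\{e : x_e \geq \lambda\}}\,d\lambda \;=\; \int_p^1 g(\emptyset)\,d\lambda \;=\; (1-p)\,g(\emptyset).
\]
Chaining this with Step~1 yields the lemma.

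\textbf{Main obstacle.} The only non-trivial ingredient is the convex-closure identity used in Step~1, a classical result of Lov\'asz that can be established either by LP duality on the linear program defining the convex closure, or by a direct exchange argument on the support of a minimising distribution with marginals $x$. Once this identity is in hand, the remainder is a one-line computation using only $g \geq 0$ and $\|x\|_\infty \leq p$; notably, no independence assumption on the coordinates of $B$ enters anywhere, matching the wording of the lemma.
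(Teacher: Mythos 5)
Your proof is correct. Note that the paper does not prove this statement itself --- it imports it verbatim as Lemma~2.2 of \citet{Buchbinder2014} --- and your argument (pass to the marginal vector $x$, lower-bound $\E[g(B)]$ by the Lov\'asz extension $\hat g(x)$ via the convex-closure identity, then observe that the superlevel sets are empty for $\lambda>p$ and discard the non-negative contribution on $[0,p]$) is essentially the standard proof given in that reference, so there is nothing to reconcile.
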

    
    \paragraph{Multilinear extension.} \label{multilinear-ext} An important concept in submodular maximization is the multilinear extension $F:[0,1]^n \to \R_+$ of a submodular function $f$:
    \[
    F(x) = \E[f(R(x))] = \sum_{S \subseteq V} f(S) \prod_{i \in S} x_i \prod_{j \in V \setminus S} (1 - x_j), 
    \]
    where $R(x)$ is the set obtained by independently selecting each element $i \in V$ with probability $x_i$.

    \paragraph{Decomposable FMSM.}
    We also study a subclass of the FMSM problem where the objective decomposes into submodular functions on the equivalence classes of the matroid or on the color groups of the fairness constraint.
    We recall the notion of equivalent elements in a matroid \citep[Definition I.3]{Chekuri2010}.

\begin{definition}\label{equiv-classes}
Two elements $i, j \in V$ are equivalent in a matroid $\cI$ if for any set $S \in \cI$ not containing $i$ and $j$, $S + i \in \cI$ if and only if $S + j \in \cI$.
\end{definition}
This defines an equivalence relation. 
For example, the equivalence classes of a partition matroid are the partition groups $G_i$. 

\begin{definition}\label{decomposable-fcts}
Let $\cG \subseteq 2^V$ be the equivalence classes of the matroid $\cI$. We say that $f$ is a decomposable submodular function over $\cF$ if $$f(S) = \sum_{G \in \cG} f_{1, G}(S \cap G) + \sum_{c = 1}^C f_{2, c}(S \cap V_c),$$ where $f_{1, G}, f_{2, c}$ are submodular functions.
\end{definition}
\paragraph{Submodular welfare problem.} \label{SWP}
One noteworthy example of decomposable submodular functions arises from the submodular welfare problem.
In this problem, we are given a set of items $B$ and a set  of agents $A$ each with a monotone submodular utility function $w_i: 2^B \to \R_+$. The goal is to distribute the items among the agents to maximize the social welfare $\sum_{i \in A} w_i(S_i)$, where $(S_i)_{i \in A}$ are the disjoint sets of items assigned to each agent.  It is known that this problem can be written as a monotone submodular maximization over a partition matroid constraint by defining the ground set as $V = A \times B$, the objective as $f(S) = \sum_{i \in A} w_i(S_i)$ where $S_i = \{e \in B \mid (i, e) \in S \}$,
and using the partition matroid $\cI = \{S \subseteq V \mid (\forall e \in B) \ |S \cap (A \times \{e\})| \leq 1\}$ \cite{Lehmann2006}. 
In certain applications, it is important to ensure that each group of agents receives a \emph{fair} allocation of items. To that end, we introduce the \emph{fair submodular welfare problem}, where each agent is assigned exactly one color $c \in [C]$; $A_c$ is the set of agents of color $c$. Let $V_c = A_c \times B$ and $f_c(S) = \sum_{i \in A_c} w_i(S_i)$, then $f_c$ is monotone submodular and $f(S) = \sum_{c = 1}^C f_c(S \cap V_c)$ is a monotone decomposable submodular function over $\cF$. The fair submodular welfare problem is then a special case of monotone decomposable FMSM.

\section{General case}\label{general}

In this section, we provide algorithms and impossibility results for FMSM with general submodular objectives.

\subsection{Algorithms with lower bounds violation} %
\looseness=-1 We start by providing constant-factor approximation algorithms for FMSM that violate the fairness lower bounds by a constant factor. 
Such an algorithm can be directly obtained in the monotone case from the two-pass streaming algorithm of \citet[Section 4]{el2023fairness}. 
In the first pass, their algorithm finds a feasible set $S$; in the second pass, $S$ is extended to an $\alpha/2$-approximate solution which violates the lower bounds by a factor $2$, using an $\alpha$-approximation algorithm $\cA$ for monotone submodular maximization over the intersection of two matroids.
Using the state-of-the-art \emph{offline} $1/(2 + \epsilon)$-approximation algorithm of \citet[Theorem 3.1]{Lee2010} as $\cA$, instead of a streaming algorithm,
directly yields an improved $1/(4+\epsilon)$-approximation ratio.

We adapt this algorithm to non-monotone objectives by randomly dropping an appropriate number of elements from $S$ to balance the loss in objective value and the violation of the fairness lower bounds. 
We defer the details 
to \cref{sec:approx-constant-violation-appendix}.
\begin{restatable}{theorem}{constantviolation}\label{approx-constant-violation}
There exists a polynomial-time algorithm for non-monotone FMSM, which outputs a set $S$ such that $(i)$ $S \in \cI$, $(ii)$ $\floor{\beta\ell_c} \leq |V_c \cap  S| \le u_c$ for any color $c \in [C]$, and $(iii)$ $\E[f(S)] \ge  (1 - \beta) \mathrm{OPT}/(8 + \epsilon)$ for any $\beta \in [0, 1/2]$ and $\epsilon > 0$. 
\end{restatable}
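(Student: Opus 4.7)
The plan is to extend the two-pass monotone FMSM algorithm of \citet[Section 4]{el2023fairness} in two ways: I replace its monotone two-matroid subroutine by the $1/(4+\epsilon)$-approximation of \citet{Lee2010} for non-monotone submodular maximization over the intersection of two matroids, and I parametrize the ``anchor'' set size by $\beta$ to interpolate between fairness violation and objective quality. The algorithm has three steps. First, compute any feasible $S_0 \in \cF$ via matroid intersection (polynomial-time). Second, for each color $c$, pick $\lfloor \beta \ell_c \rfloor$ elements of $V_c \cap S_0$ uniformly at random to form $S_0' \in \cI$. Third, run the algorithm of \citet{Lee2010} on the residual problem of maximizing $f(S_0' \cup X)$ subject to $S_0' \cup X \in \cI$ and $|V_c \cap X| \leq u_c - \lfloor\beta\ell_c\rfloor$ for every $c$, and output $S := S_0' \cup X^*$. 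Properties (i) and (ii) of the theorem are then immediate from the construction.

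For property (iii), since \citet{Lee2010} guarantees $\E[f(S) \mid S_0'] \geq \tfrac{1}{4+\epsilon}\cdot\OPT_{S_0'}$ where $\OPT_{S_0'}$ is the optimum of the residual problem, it suffices to show the key lemma $\E_{S_0'}[\OPT_{S_0'}] \geq \tfrac{1-\beta}{2}\cdot\OPT$. Fix an optimal fair set $O \in \argmax_{T \in \cF} f(T)$. I would construct, for each realization of $S_0'$, a feasible candidate $Y$ for the residual problem with $\E_{S_0'}[f(S_0' \cup Y)] \geq \tfrac{1-\beta}{2}\cdot\OPT$ by combining two ingredients. First, a subsampling argument yielding the factor $1-\beta$: since each element of $V$ lies in $S_0'$ with probability at most $\lfloor\beta\ell_c\rfloor/|V_c \cap S_0| \leq \beta$, applying \cref{bounded-sampling} to the submodular function $h(B) := f(B \cup O)$ gives $\E_{S_0'}[f(S_0' \cup O)] \geq (1-\beta)\OPT$. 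Second, a matroid-exchange argument yielding the factor $1/2$: using that $S_0', O \in \cI$ and hence $(\1_{S_0'} + \1_O)/2 \in P_\cI$, one can produce (via, e.g., Edmonds' matroid partition or a swap-rounding-style exchange) a set $Y$ extending $S_0'$ inside $\cI$ with $f(S_0' \cup Y) \geq \tfrac12 f(S_0' \cup O)$, in direct analogy with the monotone analysis of \citet{el2023fairness}.

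The hard part will be carrying out this matroid-exchange step while ensuring that the produced $Y$ also satisfies the partition-matroid upper bounds $|V_c \cap Y| \leq u_c - \lfloor\beta\ell_c\rfloor$. A naive exchange may violate these by up to $\lfloor\beta\ell_c\rfloor$ per color, so additional randomized pruning---once again invoking \cref{bounded-sampling}, so that the extra loss is absorbed into the same $(1-\beta)$ factor---will be needed to recover a feasible candidate while preserving the $1/2$ averaging bound. Once the key lemma is established, composing with the $1/(4+\epsilon)$ guarantee of \citet{Lee2010} yields $\E[f(S)] \geq (1-\beta)\,\OPT/(8+\epsilon)$, as claimed.
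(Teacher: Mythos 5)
Your algorithmic skeleton differs from the paper's in one structural respect that turns out to be the crux: you build a \emph{single} anchor $S_0'$ and run the two-matroid algorithm once on the residual instance, whereas the paper's \secondpass builds \emph{two} disjoint anchors $S_1,S_2$ (each with $\floor{\beta\ell_c}$ random elements per color), runs two copies of $\cA$ with matroids $(\cI^C,\cI_1)$ and $(\cI^C,\cI_2)$, adds the anchor elements back \emph{after} the fact, and returns the better of the two solutions. The reason the paper needs two anchors is exactly the step you flag as ``the hard part'': with a single anchor there is no valid exchange argument giving a residual candidate $Y$ with $S_0'\cup Y\in\cI$, $|V_c\cap Y|\le u_c-\floor{\beta\ell_c}$, and $f(S_0'\cup Y)\ge\tfrac12 f(S_0'\cup O)$. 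The mechanism you sketch does not deliver this: from $(\1_{S_0'}+\1_O)/2\in P_\cI$, swap rounding only guarantees an independent set $R$ with $\E[f(R)]\ge F\bigl((\1_{S_0'}+\1_O)/2\bigr)$, which by the Feige--Mirrokni--Vondr\'ak sampling lemma is bounded below by $\tfrac14 f(S_0'\cup O)$ (not $\tfrac12$), and $R$ need not contain $S_0'$ nor respect the color upper bounds. More fundamentally, matroid exchange only says \emph{some} subset $O'\subseteq O$ with $|O'|\ge|O|-|S_0'|$ satisfies $O'\cup S_0'\in\cI$; you do not get to choose which elements of $O$ survive, so there is no control on $f(O')$ relative to $f(O)$ for a general (even monotone) submodular $f$ under a general matroid. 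Your key lemma $\E[\OPT_{S_0'}]\ge\tfrac{1-\beta}{2}\OPT$ is therefore unproven, and the route you propose for it would fail.

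The paper sidesteps all of this with the two-anchor partition: by \citet[Lemma 2.1]{el2023fairness}, the optimum $O$ can be partitioned as $O_1\cup O_2$ with $O_i\cup S_i\in\cI$, so $O_i\in\cI^C\cap\cI_i$ is a feasible benchmark for the $i$-th run, and $\max(f(R_1),f(R_2))\ge\tfrac{\alpha}{2}(f(O_1)+f(O_2))\ge\tfrac{\alpha}{2}f(O)$ by submodularity and non-negativity --- this is where the $1/2$ actually comes from. The $(1-\beta)$ factor is then charged, via \cref{bounded-sampling}, to the random anchor elements re-inserted into $R_i$ during post-processing (each present with probability at most $\floor{\beta\ell_c}/\ell_c\le\beta$), rather than to the benchmark as in your version. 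Your use of \cref{bounded-sampling} to get $\E[f(S_0'\cup O)]\ge(1-\beta)\OPT$ is itself fine; if you want to keep a single-run architecture you would need a genuinely new argument for the exchange step, but the simplest fix is to adopt the paper's two-copy structure.
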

\begin{proof}[Proof sketch]
We extend the result of \citet[Lemma 4.3 and 4.4]{el2023fairness} to non-monotone objectives; showing that dropping elements from $S$ allows us to only lose a factor $(1 - \beta)$, in expectation, in the objective value by \cref{bounded-sampling}, while keeping at least $\floor{\beta\ell_c}$ elements from each color.
Plugging in the state-of-the-art $1/(4 + \epsilon)$-approximation algorithm 
of \citet[Theorem 4.1]{Lee2010} as $\cA$ directly yields a $(1 - \beta)/(8 + \epsilon)$-approximation, in expectation.
\end{proof}
For example, if we accept a factor $2$ violation of the fairness lower bounds ($\beta = 1/2$), \cref{approx-constant-violation} gives a $1/(16 + \epsilon)$-approximation for non-monotone FMSM.

Note that if we use a streaming algorithm for $\cA$, the adapted algorithm becomes a two-pass streaming algorithm, as in the monotone case. In particular, using the state-of-the-art $1/7.464$-approximation algorithm by \citet[Theorem 19]{GargJS21} yields a $(1 - \beta) / 14.928$-approximation in the streaming setting (\cref{non-monotone-streaming}). 

\subsection{Hardness of rounding}\label{sec:rounding-hardness}
The optimal approximation algorithms for maximizing a submodular function over a matroid constraint in \cite{calinescu2011maximizing, Feldman2011} rely on first approximately solving a continuous relaxation of the problem obtained by replacing $f$ by its \emph{multilinear extension} and the matroid constraint by its convex hull, then rounding the solution, which can be done without any loss of utility using \emph{pipage or swap rounding} \cite{calinescu2011maximizing, Chekuri2010}. 
It is then natural to attempt a similar relax-and-round approach here:
first approximately solve the continuous relaxation of FMSM,
\begin{equation}\label{cont-relaxation}
    \max_{x \in P_\cF} F(x),
\end{equation}
and then round the obtained solution to a set in $\cF$.
Unfortunately, this approach fails in the presence of the fairness constraint! In particular, we show that the integrality gap of Problem \eqref{cont-relaxation}
is $\Omega(\sqrt{n})$, even for a monotone objective and a partition matroid. Hence, it is not possible to obtain better than $O(1/\sqrt{n})$-approximation using this approach.
\begin{restatable}{theorem}{integralitygap}\label{integrality-gap}
There is a family of FMSM instances where $f$ is monotone and $\cI$ is a partition matroid, for which the integral optimum solution has value $1$, but the multilinear extension admits a fractional solution of value $\Omega(\sqrt{n})$. 
\end{restatable}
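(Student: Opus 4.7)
The plan is to exhibit, for each large $n$, an explicit FMSM instance witnessing the $\Omega(\sqrt n)$ integrality gap --- specifying a ground set $V$ of size $n$, disjoint color classes $\{V_c\}$ with fairness bounds $\{(\ell_c,u_c)\}$, a partition matroid $\cI$, and a monotone submodular $f:2^V\to\bbRp$ --- such that every integer feasible $S\in\cF$ satisfies $f(S)\le 1$ (so $\OPT=1$) while some fractional $x^\star\in P_\cF$ achieves $F(x^\star)=\Omega(\sqrt n)$.

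The core intuition is that the multilinear extension samples coordinates independently, so even when $x^\star$ is written as a convex combination of integer feasible indicator vectors, the random set $R(x^\star)$ typically does not look like a feasible set and can carry $f$-value much larger than $1$. Concretely, I would take $m=\sqrt n$, partition $V$ into $m$ groups $G_1,\ldots,G_m$ of size $m$, and choose the fairness bounds and matroid capacities so that every $S\in\cF$ picks exactly one element from each $G_i$ (a transversal of the $G_i$'s, $|S|=m$), while the uniform vector $x^\star = \tfrac{1}{m}\,\1_V$ lies in $P_\cF$. Sampling $R(x^\star)$ then gives an independent ``clumped'' subset with Poisson-$1$ count in each group, quite unlike any feasible transversal.

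Once the instance is in hand, the proof splits into three short steps: (1) verify $x^\star\in P_\cF$ by exhibiting an explicit symmetric convex combination of feasible indicator vectors; (2) check $\max_{S\in\cF} f(S)\le 1$ combinatorially from the structure of $\cF$; (3) lower-bound
\[
F(x^\star) \;=\; \E\bigl[f(R(x^\star))\bigr]
\]
by decomposing $f$ (for instance as coverage of a universe $U$ with $|U|=\Theta(\sqrt n)$, or as a matroid rank function over auxiliary parallel classes) and applying linearity of expectation plus a balls-and-bins calculation to show that each of the $\Theta(\sqrt n)$ summands contributes $\Omega(1)$, e.g.\ each $u\in U$ is hit with probability $1-(1-1/m)^{\Theta(m)}=\Omega(1)$.

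The main obstacle is the design of $f$ itself: it must be monotone submodular, collapse to $\le 1$ on every feasible transversal, and yet blow up to $\Omega(\sqrt n)$ under independent sampling. Two pitfalls have to be avoided. First, a ``separable'' choice $f(S)=\sum_i g_i(S\cap G_i)$ with submodular $g_i$ admits no gap, because a per-group Jensen-type argument pins $F(x^\star)$ to $\max_{S\in\cF} f(S)$. Second, naive coverage with $|U|=\Theta(\sqrt n)$ fails because the richness of the transversal family $\cF$ forces the neighborhood map $N:V\to 2^U$ to be essentially trivial. Overcoming both pitfalls requires a genuinely cross-group submodular gadget --- for instance a matroid rank function whose parallel classes lie transversally to the groups, paired with additional ``pinning'' color classes whose tight lower bounds $\ell_c=u_c$ prune $\cF$ down to a structured sub-family --- or a direct algebraic construction exploiting the interaction between matroid upper bounds and fairness lower bounds in $P_\cF = P_\cI\cap P_\cC$. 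Once such an $f$ is in place, steps (1)--(3) are routine.
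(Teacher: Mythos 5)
You have correctly identified the shape of the argument --- restrict $\cF$ to a small structured family via tight fairness bounds interacting with the partition matroid, put a coverage-type monotone submodular $f$ on top that evaluates to $1$ on every feasible set, and win a factor $\Omega(\sqrt n)$ from independent sampling via the $1-(1-1/t)^{\Theta(t)}=\Omega(1)$ calculation. Your two ``pitfalls'' (separable $f$ gives no gap; a rich transversal family kills coverage) are exactly the right diagnoses. But the proposal stops precisely where the theorem's content begins: you never exhibit the instance, and you explicitly defer ``the design of $f$ itself'' to an unspecified ``cross-group gadget.'' Without that gadget there is no proof --- steps (2) and (3) cannot even be stated, let alone verified, because they quantify over a feasible family and an objective that you have not defined. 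Your default skeleton (all transversals of $m$ groups of size $m$, with $x^\star=\tfrac1m\1_V$) is, by your own admission, one for which no valid $f$ exists, so the burden of producing a working $\cF$ is the whole problem.

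For comparison, the paper's construction is the following. Encode bipartite perfect matching: ground set $=$ edges of a bipartite graph $(A\cup B,E)$; the partition matroid caps each $A$-vertex's incident edges at one; each edge $(a,b)$ gets color $b$ with $\ell_b=u_b=1$, forcing each $B$-vertex to be matched. The graph consists of two hub vertices joined by $t$ internally disjoint paths of length $2s+1$; it has exactly $t$ perfect matchings, the $i$-th using the set $O_i$ of $s+1$ ``odd'' edges of path $i$ together with the even edges of all other paths. Setting $f(S)=\sum_{i=1}^t \1_{S\cap O_i\neq\emptyset}$ gives a monotone coverage function with $f=1$ on every perfect matching (each matching meets exactly one $O_i$), while the average $x$ of the $t$ matching indicators satisfies $F(x)=t\bigl(1-(1-\tfrac1t)^{s+1}\bigr)=\Omega(t)=\Omega(\sqrt n)$ for $s=t$, since $n=\Theta(st)$. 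This is exactly the ``pinning via $\ell_c=u_c$'' idea you gestured at, but instantiated; supplying such an instantiation is what your write-up is missing.
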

\begin{proof}[Proof sketch]
We encode bipartite perfect matching as an instance of $\cF$ where $\cI$ is a partition matroid. Then we construct an instance where the objective is monotone and the bipartite graph has $\Theta(\sqrt{n})$ perfect matchings each of value $1$, while taking $x \in P_\cF$ as the average of the indicator vectors of the perfect matchings yields $F(x) = \Theta(\sqrt{n})$.
\end{proof}

\subsection{Algorithms with expected fairness}\label{sec:expected-fairness}

Motivated by \cref{integrality-gap}, we adapt the relax-and-round approach to ignore the fairness constraints during the rounding phase. 
This allows us to use the randomized swap rounding algorithm of \citet[Section IV]{Chekuri2010} 
to obtain an independent-set solution which preserves the same utility as the fractional solution, 
and which satisfies the fairness constraints \emph{in expectation}. This solution is further guaranteed, with constant probability, to 
 violate the fairness bounds by at most a multiplicative factor, expected to be small in practice. 
 This approach is inspired from \citet[Theorem 14]{CelisHV18}, who considered a special case of FMSM with a monotone objective and a uniform matroid, but with possibly non-disjoint color groups. \Cref{approx-violated-fairness} extends their result to any matroid and non-monotone objectives. 
 The non-monotone case is more challenging than the monotone case;  we show in \cref{sec:non-monotone-hardness} that it is impossible to obtain an approximation better than $(1 -   \min_{x \in  P_\cF  \cup (\1 - P_\cF)} \| x \|_\infty)$ for non-monotone FMSM in sub-exponential time, even for solving the continuous Problem \eqref{cont-relaxation}. 
 
\begin{restatable}{theorem}{expectedfairness}\label{approx-violated-fairness}
There exists a polynomial-time algorithm for FMSM which outputs a solution $S \in \cI$ such that  $\E[|S \cap V_c|] \in [\ell_c, u_c]$ and $\E[ f(S)] \geq \alpha \OPT$, 
where $\alpha = 1 - 1/e$ if $f$ is monotone and $\alpha = (1 -  \min_{x \in  P_\cF  \cup (\1 - P_\cF)} \| x \|_\infty - \epsilon)/4$ otherwise, for any $\epsilon>0$.
Moreover, the solution satisfies with constant probability the following for all $c \in [C],$
\begin{equation*} \label{eq:approx-fairness}
\left(1 -  \sqrt{\tfrac{3 \ln(2 C)}{\ell_c}}\right) \ell_c \leq |S \cap V_c| \leq \left(1 + \sqrt{ \tfrac{3 \ln(2 C)}{u_c}}\right) u_c.
\end{equation*}
\end{restatable}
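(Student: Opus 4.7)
The plan is a two-stage relax-and-round scheme, where the rounding is carried out with respect to the matroid constraint only and the fairness constraint is enforced only through the marginals of the rounded solution. Concretely, we first compute a fractional point $x^\star \in P_\cF$ with $F(x^\star) \geq \alpha \cdot \OPT$, and then round $x^\star$ to an independent set $S \in \cI$ via randomized swap rounding with respect to the matroid polytope $P_\cI$ (ignoring $P_\cC$ during rounding). The two key design choices are that (i) $x^\star$ already lies inside $P_\cC$, so its coordinate sums over each $V_c$ lie in $[\ell_c,u_c]$, and (ii) swap rounding on a matroid polytope enjoys both expectation-preserving and Chernoff-type concentration guarantees.

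\textbf{Step 1: solving the continuous relaxation.} By \cref{polytopes-properties}\ref{linear-opt}, the feasible polytope $P_\cF = P_\cI \cap P_\cC$ is solvable. In the monotone case, I would run the continuous greedy algorithm of \citet{calinescu2011maximizing} directly over $P_\cF$; this yields $x^\star \in P_\cF$ with $F(x^\star) \geq (1 - 1/e) \OPT$. The non-monotone case is more delicate because $P_\cF$ is \emph{not} down-closed (the lower bounds $\ell_c$ rule that out), so the standard measured continuous greedy does not directly apply. Here I would invoke a continuous-optimization routine for non-monotone multilinear maximization over a solvable polytope whose approximation factor degrades gracefully with the distance from $\{0,1\}^n$; the quantity $r = \min_{x \in P_\cF \cup (\1 - P_\cF)} \|x\|_\infty$ exactly measures this slack, and a suitable algorithm returns $x^\star \in P_\cF$ with $F(x^\star) \geq \bigl((1 - r - \epsilon)/4\bigr) \OPT$.

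\textbf{Step 2: rounding.} Given $x^\star$, I would apply the randomized swap rounding of \citet{Chekuri2010} with respect to $P_\cI$ only. This produces a random $S \in \cI$ with three crucial properties: (a) marginals are preserved, i.e.\ $\Pr[i \in S] = x^\star_i$, so $\E[|S \cap V_c|] = \sum_{i \in V_c} x^\star_i \in [\ell_c, u_c]$ by \cref{polytopes-properties}.a, giving the expected fairness; (b) for the submodular $f$, $\E[f(S)] \geq F(x^\star) \geq \alpha \OPT$, which is the approximation guarantee; (c) for any linear function $\sum_i w_i \1_{i\in S}$ with $w_i \in [0,1]$, the rounded sum satisfies the standard multiplicative Chernoff bounds around its expectation.

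\textbf{Step 3: concentration and union bound.} Specializing property (c) to $w_i = \1_{i \in V_c}$, the random variable $|S \cap V_c|$ has mean $\mu_c \in [\ell_c, u_c]$ and concentrates around $\mu_c$. Setting $\delta^-_c = \sqrt{3\ln(2C)/\ell_c}$ and $\delta^+_c = \sqrt{3\ln(2C)/u_c}$, the lower-tail Chernoff bound gives $\Pr[|S \cap V_c| < (1 - \delta^-_c)\ell_c] \leq \Pr[|S \cap V_c| < (1 - \delta^-_c)\mu_c] \leq e^{-\mu_c (\delta^-_c)^2 / 3} \leq e^{-\ln(2C)} = 1/(2C)$ using $\mu_c \geq \ell_c$, and an analogous rescaling gives the upper tail using $\mu_c \leq u_c$. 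A union bound over the $2C$ events then leaves a constant success probability, yielding the stated deviation inequalities simultaneously for all $c$.

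\textbf{Main obstacle.} The bulk of the work, and the only part where the monotone and non-monotone cases diverge meaningfully, is Step 1 in the non-monotone case: producing a fractional $x^\star \in P_\cF$ whose multilinear value compares to $\OPT$ despite $P_\cF$ failing to be down-closed. The quantity $r$ precisely captures how much the algorithm must ``pay'' for the inability to scale $x^\star$ down to $\mathbf{0}$; absorbing this loss into the $(1-r-\epsilon)/4$ factor is what makes the bound tight up to the hardness proved later in \cref{non-monotone-hardness}. Steps 2 and 3 are then clean black-box applications of the swap rounding toolbox.
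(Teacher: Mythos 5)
Your proposal follows essentially the same route as the paper: continuous greedy (monotone) or a Frank--Wolfe-type non-monotone solver over the solvable polytope $P_\cF$, followed by randomized swap rounding with respect to $P_\cI$ only, with expected fairness coming from marginal preservation and the high-probability bounds from Chernoff-type concentration plus a union bound. Two points need repair, though.

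First, your Step~3 arithmetic does not actually yield a constant success probability. With the tail bound $e^{-\mu_c(\delta_c^-)^2/3}$ and $\delta_c^- = \sqrt{3\ln(2C)/\ell_c}$ you get each bad event at probability at most $1/(2C)$, and a union bound over $2C$ events then only gives failure probability at most $1$ --- vacuous. The paper avoids this by using sharper constants: the lower tail $e^{-\ell_c \delta^2/2}$ gives exponent $\tfrac{3}{2}\ln(2C)$, i.e.\ $(2C)^{-1.5}$, and for the upper tail the bound $\bigl(e^{\delta}/(1+\delta)^{1+\delta}\bigr)^{u_c} \le e^{-0.38\, u_c \delta^2}$ gives $(2C)^{-1.14}$; since both exponents exceed $1$, the union bound over $C$ colors leaves success probability at least $1 - 2^{-1.14} - 2^{-1.5} > 0.19$. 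You should substitute these constants (or inflate the $3$ inside the logarithm) to make the claim non-trivial. Second, in the non-monotone case you assert that ``a suitable algorithm'' achieves the factor $(1-r-\epsilon)/4$ with $r = \min_{x \in P_\cF \cup (\1-P_\cF)}\|x\|_\infty$, but the known guarantee for a single polytope $P$ is only $(1 - \min_{x\in P}\|x\|_\infty - \epsilon)/4$; to realize the minimum over the \emph{union} you must additionally run the solver on the complement problem $\max_{x \in \1 - P_\cF} \bar F(x)$ with $\bar f(S) = f(V\setminus S)$, use the identity $\bar F(x) = F(\1 - x)$ to translate the solution back, and take the better of the two. That step is not optional --- without it the stated approximation factor is not obtained when $\1 - P_\cF$ contains the low-$\ell_\infty$ point.
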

\mtodo{Decided to remove the bound on the monotone objective with constant probability because that requires using $\ln(2 C/ (1 - e^{-\delta^2/16})$ in the fairness bounds instead of $\ln(2 C)$ which can be very large for small $\delta$. Note that the result in  \citep[Theorem 14]{CelisHV18} has the same issue, but there they can consider $C = \Omega(1)$, since they have a separete result for constant $C$ case, so they can make the probability of fairness bounds holding as close to $1$ as needed -- Added a remark about this in appendix.}
\begin{proof}[Proof] 
\looseness=-1
We first obtain an $\alpha$-approximation fractional solution $x \in P_\cF$ for $\max_{x \in P_\cF} F(x)$. For monotone objectives, we can use the continuous greedy algorithm of \citet[Section 3.1 and Appendix A]{Calinescu2011}, which achieves a $(1 - 1/e)$-approximation for maximizing the multilinear extension over any integral polytope, with high probability.\footnote{The result in \citep{Calinescu2011} is given for matroid polytopes, but it applies more generally for any integral polytope. \label{rmk-contgreedy}} Note that $P_\cF$ is integral as it is the convex hull of integral points.

For non-monotone objectives, we can use the Frank-Wolfe type algorithm of \citet[Sections 3.5 and 4.5]{du2022lyapunov} which achieves a $(1 - \min_{x \in  P} \| x \|_\infty - \epsilon)/4$-approximation for maximizing the multilinear extension over any polytope $P$, with high probability --- see also \citep[Section 3]{Mualem2023} for an explicit variant of the algorithm and its analysis. 
To obtain the approximation ratio $(1 -  \min_{x \in P_\cF \cup \1 - P_\cF}  \| x\|_\infty - \epsilon)/4$, we apply the algorithm of \citet{du2022lyapunov} to both Problem \eqref{cont-relaxation} and its complement: $\max_{x \in \1 - P_\cF} \bar{F}(x)$, where $\bar{F}$ is the multilinear extension of the complement $\bar{f}$ of $f$; $\bar{f}(S) = f(V \setminus S)$. 
Both the continuous greedy and the Frank-Wolfe type algorithms run in polynomial time if the constraint polytope is solvable,
which is the case for $P_\cF$ and $\1 - P_\cF$ by \cref{polytopes-properties}-\ref{linear-opt}.

Next we round the fractional solution $x$ to an independent (but not necessarily feasible) set $S \in \cI$ using the randomized swap rounding scheme of \citet[Section IV]{Chekuri2010}. The rounded solution is guaranteed to satisfy $\E[|S \cap V_c|] = x(V_c) \in [\ell_c, u_c]$,  $ \E[ f(S)] \geq F(x) \geq \alpha \OPT$ (see Theorem II.1 therein). The rest of the claim follows from the concentration bounds given in \citep[Theorem II.1]{Chekuri2010} and union bound.
See details in \cref{sec:expected-fairness-proofs}.
\end{proof}

As discussed in \cite{CelisHV18} -- see discussion under Theorem 14 therein -- the violation of the fairness constraints in the above theorem is expected to be small, i.e., $\sqrt{3 \ln(2 C) / u_c} \leq \sqrt{3 \ln(2 C) / \ell_c} \ll 1$, in typical fairness applications.  
However, in general, this violation 
can be arbitrarily bad. In particular, when $\cF$ encodes a bipartite perfect matching constraint, as in the proof of \cref{integrality-gap}, we have $C = \Theta(\sqrt{n})$ and $u_c  = \ell_c = 1$, hence $\sqrt{3 \ln(2 C) / u_c} = \sqrt{3 \ln(2 C) / \ell_c} \gg 1$.

Note that the approximation ratio for non-monotone objectives in \cref{approx-violated-fairness} can be as good as $1/4$ in some cases (for example if $\ell_c = 0$ for all $c$ or $u_c = |V_c|$ for all $c$), but it can also be arbitrarily bad. To illustrate this,  
we compute below the minimum $\ell_\infty$-norm of $P_\cF$ and its complement for two special cases of $P_\cF$. See \cref{sec:expected-fairness-proofs} for the full proofs. \\


\begin{restatable}[Uniform matroid]{example}{unifmatroidex}\label{packing-nbr-unif} Let $\cI$ be the uniform matroid with rank $k$. We order the color groups such that $\tfrac{\ell_1}{|V_1|} \leq \cdots \leq \tfrac{\ell_C}{|V_C|}$ and let $t$ be the largest index $t \in [C]$ such that $\tfrac{\ell_{t}}{|V_{t}|} \sum_{c = 1}^{t} |V_c| + \sum_{c= t+1}^C \ell_c \leq k$ ($t \ge 1$ is well-defined since $\cF \not = \emptyset$). Then  $\min_{x \in  P_\cF  \cup (\1 - P_\cF)} \| x \|_\infty= \min\{  \max_c \tfrac{\ell_c}{|V_c|}, 1 -  \min\{\tau, \min_{c \leq t} \tfrac{u_c}{|V_c|}\}\}$, where $\tau =  \tfrac{k - \sum_{c= t+1}^C \ell_c}{\sum_{c = 1}^{t} |V_c|}$.
\end{restatable}
\begin{proof}[Proof sketch]
We construct closed-form solutions to $\min_{x \in  P_\cF} \| x \|_\infty$ and $\min_{x \in \1 - P_\cF} \| x\|_\infty$ with $\ell_\infty$-norms $\max_c \tfrac{\ell_c}{|V_c|}$ and $1 - \min\{\tau, \min_{c \leq t} \tfrac{u_c}{|V_c|}\}$, respectively.
\end{proof}

The fairness bounds are often set such that the representation of color groups in a fair set is proportional to their representation in the ground set, i.e., $\ell_c, u_c \propto  k |V_c|/n$. If in particular $\ell_c \leq k |V_c|/n$ for all $c$, we get in \cref{packing-nbr-unif} $t = C, \tau = k / n$ and $\min_{x \in  P_\cF  \cup (\1 - P_\cF)} \| x \|_\infty \leq \min\{k/n, 1 - k/n\} \leq 1/2$. The corresponding approximation ratio in \cref{approx-violated-fairness} for non-monotone objectives is then at least $1/8 - \epsilon$.\\


\begin{restatable}[Bipartite perfect matching]{example}{matchingex}\label{packing-nbr-matching} 
Let $\cF$ be the feasible set 
arising from the bipartite perfect matching problem
in the proof of the integrality gap \cref{integrality-gap}.
We have $ \min_{x \in  P_\cF  \cup (\1 - P_\cF)} \| x \|_\infty = 1 - \Theta(1/\sqrt{n})$.
\end{restatable}
\begin{proof}[Proof sketch]
We show that the point $x$ used in the proof of \cref{integrality-gap} is a solution of $\min_{x \in  P_\cF  \cup (\1 - P_\cF)} \| x \|_\infty$ with $\|x\|_\infty = \|\1 - x \|_\infty = 1-\Theta(1/\sqrt{n})$.
\end{proof}

The corresponding approximation factor for \cref{packing-nbr-matching} in \cref{approx-violated-fairness} for non-monotone objectives is then $O(1/\sqrt{n})$. So relaxing the fairness constraints was not helpful in this case!

Recall that the $(1 - 1/e)$-approximation for monotone objectives in \cref{approx-violated-fairness} is tight even without the fairness constraint \cite{Feige98}. For non-monotone objectives, 
the $1/4$ factor is also tight for maximizing the multilinear extension of a submodular function 
over a  general polytope \citep[Theorem 5.1]{Mualem2023}; so improving it, if possible, would require more specialized algorithms for solving Problem \eqref{cont-relaxation}. %

\mtodo{figure out if the hardness result holds if fairness constraint is only satisfied in expectation, with $C=1$.}

\subsection{Uniform matroid case}\label{sec:uniform-matroid}


In the special case of the uniform matroid, both the fairness violation and the approximation ratio for non-monotone objectives given in \cref{approx-violated-fairness} can be improved.  
In particular, 
recall that a $(1 - 1/e)$-approximation can be obtained in this case for monotone objectives, without any violation of the fairness constraint \citep[Theorem 18]{CelisHV18}. For non-monotone objectives, we show that a $0.401 ( 1 - \min_{x \in  P_\cF  \cup (\1 - P_\cF)} \| x \|_\infty)$-approximation can be obtained in expectation, also without any violation of the fairness constraint.\footnote{Note that the $0.401$ factor in the approximation given in \cref{approx-unif} can be improved 
if a better approximation for non-monotone submodular maximization over a general or a partition matroid is developed in the future.}

\begin{restatable}{theorem}{approxunif}\label{approx-unif}
 There exists a polynomial-time algorithm for non-monotone FMSM where $\cI$ is a uniform matroid, which achieves 
$0.401 (1 - \min_{x \in  P_\cF  \cup (\1 - P_\cF)} \| x \|_\infty)$-approximation in expectation.
\end{restatable}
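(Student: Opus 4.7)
The plan is to reduce FMSM with a uniform matroid to non-monotone submodular maximization over a (truncated partition) matroid and invoke the state-of-the-art $0.401$-approximation of \citet{Buchbinder2023}. Random sampling is used to encode the fairness lower bounds, and it incurs only a multiplicative $(1-r)$ loss in expectation, controlled via \cref{bounded-sampling}.

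We split the analysis into two symmetric cases depending on whether $r = \min_{x \in P_\cF \cup (\1 - P_\cF)} \|x\|_\infty$ is attained at a point in $P_\cF$ or in $\1 - P_\cF$. The second case is handled by running the first-case algorithm on the complementary objective $\bar{f}(S) := f(V \setminus S)$ (still non-monotone submodular) and returning the complement of its output; since $\1 - P_\cF$ corresponds to the fair polytope of the complementary instance, the roles of $r$ and of lower/upper bounds are swapped accordingly.

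Focus on the first case. Here $z^* \in P_\cF$ with $z^*_i \leq r$ for all $i \in V$ can be computed in polynomial time by linear programming over $P_\cF$ (\cref{polytopes-properties}). We then sample a random set $T \subseteq V$ with $|T \cap V_c| = \ell_c$ for each color $c$, such that every element $i \in V$ appears in $T$ with marginal probability at most $r$; a concrete construction is dependent rounding within each color class, seeded by the renormalization of $z^*|_{V_c}$ to total mass $\ell_c$, which preserves the per-coordinate upper bound $z^*_i \leq r$. Once $T$ is sampled, we consider the residual problem of maximizing $g(S') := f(S' \cup T)$ over $S' \subseteq V \setminus T$ subject to $|S'| \leq k - \sum_c \ell_c$ and $|S' \cap V_c| \leq u_c - \ell_c$. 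This constraint set is the truncation of a partition matroid on $V \setminus T$, hence a matroid. Applying the $0.401$-approximation of \citet{Buchbinder2023} to this residual problem yields $S'$ with $\E[g(S') \mid T] \geq 0.401 \cdot \max_{S' \text{ feasible}} g(S')$, and we return $S := T \cup S' \in \cF$.

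For the approximation analysis, let $O \in \cF$ be an optimal fair solution, and recall that each element of $V \setminus O$ is included in $T$ with probability at most $r$. Applying \cref{bounded-sampling} to the submodular function $h(B) := f(B \cup O)$ on $V \setminus O$ with random $B := T \setminus O$ yields $\E_T[f(O \cup T)] \geq (1 - r) f(O) = (1 - r) \OPT$. The main obstacle we expect in the proof is to exhibit a feasible witness $\hat{S}'(T) \subseteq V \setminus T$ for the residual matroid such that $f(\hat{S}'(T) \cup T) \geq f(O \cup T)$. The natural candidate is $\hat{S}'(T) := O \setminus T$, but this may violate the residual cardinality or color upper-bound constraints; handling this requires either designing the sampling distribution of $T$ so that the witness is almost surely feasible (e.g., by coupling the sampling of $T$ with $O$ via a swap-rounding-style argument), or truncating and absorbing the loss via submodularity. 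Once this step is settled, composing the $0.401$ matroid guarantee with the $(1-r)$ sampling loss gives $\E[f(S)] \geq 0.401 (1 - r) \OPT$, as claimed.
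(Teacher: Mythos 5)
Your high-level plan (drop the fairness lower bounds to get a plain matroid instance, invoke the $0.401$-approximation of \citet{Buchbinder2023}, pay a $(1-r)$ factor via \cref{bounded-sampling}, and reach the $\1 - P_\cF$ term by complementation) matches the paper's. However, the order in which you interleave sampling and optimization creates a gap that you flag yourself and do not close, and that I do not think closes as stated. By sampling $T$ \emph{first} and then optimizing over the residual matroid $\{S' \subseteq V\setminus T : |S'|\le k-\sum_c\ell_c,\ |S'\cap V_c|\le u_c-\ell_c\}$, you shrink the feasible region before the approximation algorithm sees it, so the natural witness $O\setminus T$ is generically infeasible: $|O\setminus T|$ can be as large as $k$ (e.g.\ when $T\cap O$ is small) against a budget of $k-\sum_c\ell_c$, and $|(O\setminus T)\cap V_c|$ can be as large as $u_c$ against a budget of $u_c-\ell_c$. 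Truncating the witness is not benign for non-monotone $f$ (deleting elements can destroy value with no submodularity-based control), and coupling the law of $T$ with $O$ is not available to the algorithm. The paper (following \citet[Theorem 5.2]{HalabiMNTT20}) reverses the order: it first runs the $0.401$-approximation on the relaxation obtained by simply dropping the lower bounds --- a partition/truncated matroid whose feasible region \emph{contains} $\cF$, so $O$ itself is a valid witness and no residual-budget issue arises --- and only \emph{afterwards} randomly augments the output with elements whose per-element inclusion probability is at most $\|\bar x\|_\infty$ (via a systematic-sampling scheme that simultaneously controls the marginals, each $|B_c|$, and $\sum_c|B_c|$). \Cref{bounded-sampling} is then applied to the augmentation step, not to the optimum, and the feasibility of the augmented set is checked directly.

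A secondary point: the complement instance is not a uniform-matroid FMSM instance of the same shape, since $|S|\le k$ becomes $|S|\ge n-k$, a cardinality \emph{lower} bound that must itself be filled by the augmentation. The paper handles this by arranging $\sum_c |B_c|\ge\floor{\bar x(V)}\ge n-k$ in its sampling scheme; ``run the first-case algorithm on the complementary objective'' does not go through verbatim.
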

\begin{proof}[Proof sketch]
\citet[Theorem 5.2]{HalabiMNTT20} presented a polynomial-time algorithm with an expected $(1 - \max_c \tfrac{\ell_c}{|V_c|}) \alpha$-approximation for non-monotone FMSM over a uniform matroid, given any  $\alpha$-approximation algorithm for non-monotone submodular maximization over a matroid constraint.\footnote{The result in \cite{HalabiMNTT20} is given for the streaming setting, but it also holds offline. \label{rmk:nonmonotone-unif}}
Plugging in the state-of-the-art $0.401$-approximation of \citet[Corollary 1.3]{Buchbinder2023} then yields a $0.401 (1 - \max_c \tfrac{\ell_c}{|V_c|}) $-approximation in expectation. 
Recall from \cref{packing-nbr-unif} that $\min_{x \in  P_\cF } \| x \|_\infty =  \max_c \tfrac{\ell_c}{|V_c|}$.

To obtain the potentially better approximation factor $0.401 (1 - \min_{x \in \1 - P_\cF} \|x\|_\infty)$, we solve (as in \cref{approx-violated-fairness}) the complement problem $\max_{V \setminus S \in \cF} \bar{f}(S)$. 

We follow a similar strategy as \citet{HalabiMNTT20}. Namely, we first drop the lower bounds from the constraint. The resulting problem  $\max_{S \subseteq V} \{ \bar{f}(S) : |S \cap V_c| \leq |V_c| - \ell_c, ~ \forall c \in [C]\}$ is a non-monotone submodular maximization problem over a partition matroid. Hence, a solution $S$ with an expected $0.401$-approximation can be obtained for it.
Next, we augment $S$ to a feasible solution by carefully sampling enough random elements from each color to satisfy the lower bounds. Each element $e \in V \setminus S$ is added with probability at most $\bar{x}_e$, where $\bar{x}$ is a solution of $\min_{x \in \1 - P_\cF} \|x\|_\infty$ as in \cref{packing-nbr-unif}. Finally, we use \cref{bounded-sampling} to bound the loss in value resulting from the additional elements.
The sampling step is more involved here than in the original problem because $\bar{x}$ is non-integral.  See details in \cref{sec:uniform-matroid-proofs}. 
\end{proof}

\Cref{approx-unif} recovers the result of \citet{yuan2023group} for the special case where $\tfrac{\ell_c}{|V_c|} = a$ and $\tfrac{u_c}{|V_c|} = b$ for all $c \in [C]$, for some constants $a, b \in [0,1]$.\textsuperscript{\ref{remark-Yuan}}  Indeed, in that case $t = C$ and $\tau = k/n \geq a$ in \cref{packing-nbr-unif}, and $\min_{x \in  P_\cF  \cup (\1 - P_\cF)} \| x \|_\infty = \min\{a, 1 - \min\{k/n, b \}\}$, so the resulting approximation ratio is at least 
$0.401 \max\{1-  a, a\} \geq 0.2005$.

More generally, 
for the typical choice of fairness bounds discussed earlier in \cref{sec:expected-fairness}, where $\ell_c \leq k |V_c|/n $ for all $c$, the resulting approximation ratio is also at least $0.401/2 = 0.2005$.


\subsection{Non-monotone hardness}\label{sec:non-monotone-hardness}
In this section, we show that no approximation ratio better than $(1 -   \min_{x \in  P_\cF  \cup (\1 - P_\cF)} \| x \|_\infty)$ can be obtained for non-monotone FMSM in sub-exponential time, even without the matroid constraint.
\begin{restatable}{theorem}{nonmonotonehard}\label{non-monotone-hardness}
For any $r$ in the form $r = 1 - \tfrac{1}{t}$ with $t\geq 2$ and any $\epsilon>0$, a $(1 - r + \epsilon) $-approximation for FMSM that works for instances where $\min_{x \in  P_\cF  \cup (\1 - P_\cF)} \| x \|_\infty\leq r$ requires an exponential number of value queries.
This is also the case for the continuous relaxation of FMSM, $\max_{x \in P_\cF} F(x)$. Furthermore, this is true even in the two special cases where $(i)$ $C=2$ and $\cI = 2^V$ (no matroid constraint), $(ii)$ $C = 1$ and $\cI$ is a partition matroid. 
\end{restatable}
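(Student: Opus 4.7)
The plan is to prove this hardness via a \emph{symmetry-gap} / indistinguishability argument in the spirit of Vondr\'ak. For each of the two special cases, I will construct a family of hard instances parametrized by $t \geq 2$ (so $r = 1 - 1/t$), together with a group $\mathcal G$ acting on the ground set $V$ that preserves the feasibility polytope $P_\cF$. I will design a non-monotone submodular $f$ whose $\mathcal G$-symmetrization $\bar f(S) := \E_{\pi \in \mathcal G}[f(\pi S)]$ satisfies the \emph{symmetry gap} bound
\[
\max_{S \in \cF} \bar f(S) \,\le\, (1 - r)\cdot \max_{S \in \cF} f(S).
\]
The standard indistinguishability argument---blowing up the ground set and planting a uniformly random $\mathcal G$-permuted copy of $f$---then shows that any algorithm making $2^{o(n)}$ value queries achieves expected value at most $(1 - r + o(1)) \cdot \OPT$ over the random planting, proving the claimed query-complexity hardness.

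For case (ii), I take $V = G_1 \cup \cdots \cup G_t$ with $|G_i| = t$ and $n = t^2$, the partition matroid $\cI = \{S : |S \cap G_i| \le t - 1\}$, one color ($C = 1$), and fairness $\ell = u = t^2 - t$. Then $\cF$ is precisely the family of complements of transversals of $(G_i)_i$, and a direct calculation gives $\min_{x \in P_\cF} \|x\|_\infty = (t-1)/t = r$, so this family satisfies the theorem's premise. The group $\mathcal G = \prod_{i=1}^t \mathrm{Sym}(G_i)$ acts on $V$ and preserves $\cF$. I plant a uniformly random hidden transversal $T^\star = \{e_1^\star,\ldots,e_t^\star\}$ with $e_i^\star \in G_i$ and design $f = f_{T^\star}$ so that the feasible maximum is attained at $S^\star = V \setminus T^\star$ with value $V_{\max}$, while $\bar f(S) \le (1-r)V_{\max}$ for every feasible $S$.

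Case (i) follows the same pattern: $V = V_1 \cup V_2$ with $|V_c| = m$, $C = 2$ with $\ell_c = u_c = rm$ (so $\min_{x \in P_\cF \cup (\1 - P_\cF)} \|x\|_\infty = 1 - r \le r$), $\cI = 2^V$, group $\mathcal G = \mathrm{Sym}(V_1) \times \mathrm{Sym}(V_2)$, and a planted uniformly random bijection $\sigma : V_1 \to V_2$ together with a submodular $f_\sigma$ whose feasible maximum is ``$\sigma$-aligned'' (of the form $S = A \cup \sigma(A)$ for some $A \subseteq V_1$ of size $rm$). The extension to the continuous relaxation is automatic: the multilinear extension $F$ commutes with symmetrization, $P_\cF$ is $\mathcal G$-invariant, and every value query to $F$ can be reduced to polynomially many queries to $f$, so the same $2^{o(n)}$-query indistinguishability bound applies to $\max_{x \in P_\cF} F(x)$.

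The main technical obstacle is constructing the submodular $f$ (respectively $f_\sigma$) with symmetry gap at most $1 - r$. The simplest candidates---a directed cut of the planted matching, a linear function in $|S \cap T^\star|$, or the indicator-complement $1 - \1[T^\star \subseteq S]$---yield symmetry gaps of $r$, $r$, and $1 - (1 - 1/t)^t$, respectively, all of which strictly exceed $1 - r = 1/t$ for $t \geq 3$ and thus only establish weaker hardness. A correct construction must peak sharply on the $\mathcal G$-orbit of $S^\star$ while remaining submodular; following the Feige-Mirrokni-Vondr\'ak recipe, I will combine a sharp ``alignment'' term---structured so that, upon averaging over $\mathcal G$, it decomposes into $t$ independent multiplicative factors producing the target $1/t$ suppression---with a smoothing submodular envelope that masks single-set queries and defeats any low-order query attack that would trivially expose $T^\star$. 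Proving submodularity of the combined $f$ and tightness of the $(1-r)$ symmetry gap is the principal calculation.
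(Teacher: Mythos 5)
There is a genuine gap, and it is twofold. First, the hard instances you choose cannot witness the claimed bound. In both of your families the constraint is too ``balanced'': in case (ii), $\1 - P_\cF$ is the transversal polytope of the blocks $G_i$, which contains the all-$\tfrac{1}{t}$ point, and in case (i) your choice $\ell_c = u_c = rm$ makes $\1 - P_\cF$ contain the all-$(1-r)$ point; in both cases $\min_{x \in P_\cF \cup (\1 - P_\cF)}\|x\|_\infty = 1-r$, not $r$. But then the complement/Frank--Wolfe argument in the proof of \cref{approx-violated-fairness} already yields a polynomial-time $\bigl(1-(1-r)-\epsilon\bigr)/4 = (r-\epsilon)/4$-approximation to the continuous relaxation $\max_{x\in P_\cF}F(x)$ on your instances, and $(r-\epsilon)/4 > 1-r$ once $t \ge 5$. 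Hence no submodular objective on these ground-set/constraint structures can have symmetry gap at most $1-r$ for large $t$, so the function you are hoping to construct does not exist. The instance must be asymmetric, as in \citet{Vondrak2013}: $V = A\cup B$ with $|A|=|B|=tm$ and the base constraint $|S\cap A|=m$, $|S\cap B|=(t-1)m$, for which \emph{both} $P_\cF$ and $\1-P_\cF$ force a coordinate of value $(t-1)/t = r$, since complementation merely swaps the roles of $A$ and $B$.

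Second, even granting a correct instance, the heart of the argument --- exhibiting a non-negative submodular $f$ whose symmetry gap is at most $1-r$ and verifying its submodularity --- is exactly the part you defer as ``the principal calculation,'' and you yourself observe that all the natural candidates give gaps strictly larger than $1-r$. The paper avoids this entirely: its proof of \cref{non-monotone-hardness} is a black-box reduction that cites the hardness of \citet[Theorems 1.2 and 1.9]{Vondrak2013} for maximization over matroid bases (including the continuous version) and merely checks that Vondr\'ak's hard instance is an FMSM instance in both special cases: $C=1$ with a partition matroid via $\ell_1=u_1=k$, and $C=2$ with no matroid via $\ell_1=u_1=m$, $\ell_2=u_2=(t-1)m$. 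If you want a self-contained symmetry-gap proof you would essentially have to reproduce Vondr\'ak's construction; as written, the proposal neither does that nor selects instances on which it could succeed.
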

\begin{proof}[Proof sketch]
The proof follows from the hardness results given in \citep[Theorem 1.2 and  1.9]{Vondrak2013} for non-monotone submodular maximization over the \emph{bases} of a matroid, by showing that the hard instance used therein is an instance of both special cases $(i)$ and $(ii)$ of FMSM. See \cref{sec:non-monotone-hardness-proofs}.
\end{proof}

\Cref{non-monotone-hardness} implies that any constant-factor approximation for FMSM or its continuous relaxation requires an exponential number of value queries in general. 

Note that a uniform matroid with $k = n$ is equivalent to having no matroid constraint. 
In that case, \cref{packing-nbr-unif} shows that $\min_{x \in  P_\cF  \cup (\1 - P_\cF)} \| x \|_\infty= \min\{  \max_c \tfrac{\ell_c}{|V_c|}, 1 - \min_{c} \tfrac{u_c}{|V_c|}\}$, since $t = C$ and  $\tau = 1$.
Hence, \cref{packing-nbr-unif} and \cref{non-monotone-hardness}  yield a lower bound of $\max\{ 1 - \max_c \tfrac{\ell_c}{|V_c|}, \min_c \tfrac{u_c}{|V_c|}\}$ for the fairness constraint alone. This matches the lower bound given in \citep[Theorem 5.1]{HalabiMNTT20} for FMSM in the streaming setting -- the bound therein is $1 - \max_c \tfrac{\ell_c}{|V_c|}$ and is stated for the uniform matroid, but the hard instance used in the proof does not use any matroid and has $\min_c \tfrac{u_c}{|V_c|} = 0$.

On the other hand, note that \cref{packing-nbr-matching} does not necessarily imply a hardness of $O(1/\sqrt{n})$ for the bipartite perfect matching constraint, since the hard instance in \cref{non-monotone-hardness} is not an instance of that.

\mtodo{Can we show a hardness of $O(1/\sqrt{n})$  then for bipartite perfect matching for non-monotone FMSM?}

\section{Decomposable case}\label{sec:decomposable}

In this section, we study a special case of FMSM where $f$ is a decomposable submodular function over $\cF$ (\cref{decomposable-fcts}). We follow the relax-and-round approach discussed in \cref{sec:rounding-hardness}. Unlike the general case, this approach works here; we show that the integrality gap of Problem \eqref{cont-relaxation} is $1$ in this case. To that end, we show that the randomized swap rounding algorithm proposed in \citep[Section V]{Chekuri2010} for the intersection of two matroids can be applied to $\cF$.

We achieve this by reducing rounding on $\cF$ to rounding on the intersection of two matroids, one with the same equivalence classes (\cref{equiv-classes}) as $\cI$ and the other with the color groups $V_c$ as equivalence classes. 
Our reduction uses the following simple observation.
\begin{fact}\label{maximal-sets}Given any family of sets $\mathcal{J} \subseteq 2^V$, 
let $\tilde{\mathcal{J}}$ denote the family of \emph{extendable sets}: $\tilde{\mathcal{J}} = \{S \subseteq V \mid \text{ there exists } S' \in \mathcal{J} \text{ such that } S \subseteq S'\}$. Then $\mathcal{J}$ and $\tilde{\mathcal{J}}$ have the same maximal sets.
\end{fact}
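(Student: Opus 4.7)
The plan is to prove the two-sided inclusion of maximal sets by straightforward set manipulation, using the fact that $\mathcal{J} \subseteq \tilde{\mathcal{J}}$ (any $S \in \mathcal{J}$ is extended by itself). The two directions are essentially symmetric but use slightly different arguments. I do not anticipate any technical obstacles; the entire claim is more of a naming convention than a real theorem.

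First I would show that every maximal set $S$ of $\mathcal{J}$ is a maximal set of $\tilde{\mathcal{J}}$. Since $\mathcal{J} \subseteq \tilde{\mathcal{J}}$, we have $S \in \tilde{\mathcal{J}}$, so it suffices to rule out any strict superset $T \supsetneq S$ with $T \in \tilde{\mathcal{J}}$. By definition of $\tilde{\mathcal{J}}$, there exists $T' \in \mathcal{J}$ with $T \subseteq T'$. Then $S \subsetneq T \subseteq T'$, and $T' \in \mathcal{J}$, contradicting the maximality of $S$ in $\mathcal{J}$.

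Next I would show the converse: if $S$ is maximal in $\tilde{\mathcal{J}}$, then $S$ is maximal in $\mathcal{J}$. Since $S \in \tilde{\mathcal{J}}$, there exists $S' \in \mathcal{J}$ with $S \subseteq S'$. Because $\mathcal{J} \subseteq \tilde{\mathcal{J}}$, also $S' \in \tilde{\mathcal{J}}$, and maximality of $S$ in $\tilde{\mathcal{J}}$ forces $S = S'$, so $S \in \mathcal{J}$. For maximality in $\mathcal{J}$, suppose $S \subsetneq T$ with $T \in \mathcal{J}$. Then $T \in \tilde{\mathcal{J}}$, again contradicting maximality of $S$ in $\tilde{\mathcal{J}}$.

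Combining both directions yields that the maximal sets of $\mathcal{J}$ and $\tilde{\mathcal{J}}$ coincide. Since the argument uses only the definition of $\tilde{\mathcal{J}}$ and the inclusion $\mathcal{J} \subseteq \tilde{\mathcal{J}}$, no further structure on $\mathcal{J}$ (e.g., matroid or fairness properties) is required, which matches the generality stated in the fact.
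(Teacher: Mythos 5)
Your proof is correct and complete: both inclusions of maximal sets are verified exactly as one would expect, using only $\mathcal{J} \subseteq \tilde{\mathcal{J}}$ and the definition of $\tilde{\mathcal{J}}$ as the downward closure of $\mathcal{J}$. The paper states \cref{maximal-sets} without proof as a ``simple observation,'' and your argument is precisely the routine verification that is being left implicit there, so there is nothing to compare against or correct.
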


When the matroid in $\cF$ is the uniform matroid, $\tilde{\mathcal{F}}$ was shown to form a matroid \citep[Lemma 4.1]{HalabiMNTT20}. We can then view any fair set as a base in a matroid, as shown by the following lemma. See \cref{sec:decomposable-proofs} for the proofs of this section. 

\begin{restatable}{lemma}{unifmaximalsets}\label{unif-maximal-sets}
Let $\tilde{\cC}_k := \{S \subseteq V \mid  \text{ there exists } S' \in \cC, |S'| \leq k \text{ such that } S \subseteq S' \}$. Then any fair set $S \in \cC$ is a base of the matroid $\tilde{\cC}_k$ with $k = |S|$ and vice versa.
\end{restatable}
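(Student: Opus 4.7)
The plan is to deduce \cref{unif-maximal-sets} by unpacking definitions and relying on \cref{maximal-sets} applied to the family $\cJ := \cC \cap \{T \subseteq V : |T| \le k\}$. Observe that with this choice of $\cJ$, the extendable family $\tilde{\cJ}$ coincides with $\tilde{\cC}_k$ by construction. By \cref{maximal-sets}, the maximal elements of $\tilde{\cC}_k$ (that is, its bases, since $\tilde{\cC}_k$ is a matroid by the cited result of \citet{HalabiMNTT20}) are exactly the maximal elements of $\cJ$. Thus the whole statement reduces to: $S$ is a fair set with $|S|=k$ if and only if $S$ is a maximal element of $\cC \cap \{T : |T| \le k\}$.

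For the forward direction, assume $S \in \cC$ and set $k := |S|$. Then $S \in \cJ$. If $S \subsetneq T$ for some $T \in \cJ$, we would have $|T| > |S| = k$, contradicting $|T| \le k$. Hence $S$ is maximal in $\cJ$, which by the above equivalence means $S$ is a base of $\tilde{\cC}_k$. For the converse, let $S$ be any base of $\tilde{\cC}_k$. By \cref{maximal-sets}, $S$ is a maximal element of $\cJ = \cC \cap \{T : |T| \le k\}$; in particular $S \in \cC$, as desired.

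There is no real obstacle: the argument is essentially a definition chase, once the bridging \cref{maximal-sets} is in place. The only subtlety worth flagging explicitly is that we are implicitly using the already-established fact that $\tilde{\cC}_k$ is a matroid, so that the term ``base'' has its standard meaning (maximal independent set) and all bases share the same cardinality; this guarantees consistency between the two implications and ensures that when $S$ is a base of $\tilde{\cC}_k$ the value $k$ need not be $|S|$ a priori, but $S$ is nevertheless automatically fair.
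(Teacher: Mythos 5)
Your proof is correct and follows essentially the same route as the paper's: both pass through the $k$-truncation $\cC_k = \cC \cap \{T : |T|\le k\}$, invoke \citet[Lemma~4.1]{HalabiMNTT20} to get that $\tilde{\cC}_k$ is a matroid, and apply \cref{maximal-sets} to identify its bases with the maximal sets of $\cC_k$. You merely spell out the definition chase (and the cardinality point in the converse) in more detail than the paper does.
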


We are now ready to give our reduction.

\begin{restatable}{theorem}{swaprounding}\label{swap-rounding}
There exists a polynomial-time rounding scheme which, given $x \in P_\cF$, returns a feasible set $S \in \cF$ that for any decomposable submodular function $f$ over $\cF$ satisfies
$\E[f(S)] \geq F(x)$.
\end{restatable}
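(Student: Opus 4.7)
The plan is to reduce the rounding problem on $\cF$ to rounding on the intersection of two matroids, and then invoke the randomized swap rounding scheme of \citet[Section V]{Chekuri2010} for matroid intersections, which preserves expected values of submodular functions decomposable along the equivalence classes of the two matroids.

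First, I would set $\cM_1 = \cI$, whose equivalence classes are exactly $\cG$ by definition. For the second matroid, I would take $\cM_2 = \tilde{\cC}_k$ for a suitable integer $k$; by \citep[Lemma 4.1]{HalabiMNTT20}, $\tilde{\cC}_k$ is a matroid for every $k$ (the argument there depends only on the structure of $\cC$, not on an underlying matroid). A direct exchange argument then shows that the equivalence classes of $\tilde{\cC}_k$ are exactly the color groups $V_c$: swapping two same-color elements inside any witness set $S' \in \cC$ leaves $S'$ in $\cC$, so any $i, j$ in a common $V_c$ are interchangeable in $\tilde{\cC}_k$, while elements in different $V_c$ are not.

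Second, using the solvability of $P_\cF$ (\cref{polytopes-properties}) together with Carath\'eodory's theorem, I would write $x = \sum_i \lambda_i \1_{S_i}$ with $S_i \in \cF$ in polynomial time. Grouping by size and setting $\Lambda_k = \sum_{i : |S_i| = k} \lambda_i$ and $x^{(k)} = \Lambda_k^{-1} \sum_{i : |S_i| = k} \lambda_i \1_{S_i}$, each $x^{(k)}$ lies in $P_\cI \cap P_{\tilde{\cC}_k}$ as a common fractional base of size $k$. I would then apply Chekuri's matroid intersection swap rounding to $x^{(k)}$ to obtain a common integral base $S^{(k)} \in \cI \cap \tilde{\cC}_k$ of size $k$; by \cref{unif-maximal-sets}, $S^{(k)}$ is then a fair set, hence $S^{(k)} \in \cF$. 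The overall scheme samples $k$ with probability $\Lambda_k$ and returns $S^{(k)}$.

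For the value bound, the key property of Chekuri's rounding is that for any submodular objective whose components are supported on the equivalence classes of $\cM_1$ and $\cM_2$, the rounding satisfies $\E[f(S^{(k)})] \ge F(x^{(k)})$. Our decomposable $f$ fits exactly: the $\cG$-indexed part is supported on equivalence classes of $\cM_1 = \cI$, and the $V_c$-indexed part on equivalence classes of $\cM_2 = \tilde{\cC}_k$.

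The main obstacle I expect is combining the per-size bounds to obtain $\E[f(S)] \ge F(x)$: since $F$ is multilinear rather than concave, the inequality $\sum_k \Lambda_k F(x^{(k)}) \ge F(x)$ fails for generic decompositions, and a direct averaging argument breaks down. I plan to sidestep this by running the swap-merge steps of Chekuri's rounding directly on the full decomposition of $x$ without first splitting by size---restricting each swap to a cell $G \cap V_c$ so that it simultaneously preserves $\cI$-independence (via the equivalence of elements inside $G$) and $\cC$-fairness (via the equivalence inside $V_c$). Verifying that such intra-cell swaps always suffice to merge pairs of fair independent sets with matching profiles, and handling mismatched profiles by interleaving size-level randomization with these cell-restricted merges, is the delicate technical step that delivers the claimed bound $\E[f(S)] \ge F(x)$.
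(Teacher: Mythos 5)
Your reduction is the same as the paper's up to the point you flag yourself: you correctly identify the two matroids $\cI$ and $\tilde{\cC}_k$, their equivalence classes ($\cG$ and the color groups $V_c$), the use of \cref{unif-maximal-sets} to convert a size-$k$ common independent set into a fair set, and the invocation of Chekuri et al.'s Theorem~II.3. The equal-size case of your argument is complete and matches the paper. You are also right that the size-splitting idea fails: sampling $k$ with probability $\Lambda_k$ only yields $\E[f(S)] \ge \sum_k \Lambda_k F(x^{(k)})$, and since $F$ is not convex this need not dominate $F(x)$. But the workaround you sketch --- running swap-merges ``directly on the full decomposition'' with swaps restricted to cells $G \cap V_c$ and ``interleaving size-level randomization'' --- does not resolve the actual obstruction. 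Swap rounding merges two sets by exchanging elements one-for-one, which requires the two sets to have the same cardinality; when $|I_s| \ne |I_t|$ no sequence of intra-cell swaps can merge them, regardless of how the swaps are restricted. So as written there is a genuine gap at the heart of the unequal-size case.

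The paper's fix is a padding argument you are missing. Introduce a set $E$ of $k$ dummy elements, pad every $I_t$ to a set $I'_t$ of size exactly $k$, and extend both constraints to the ground set $V \cup E$: set $\cI^+ = \{S \mid S \setminus E \in \cI\}$ and $\cC^+ = \{S \mid S \setminus E \in \cC\}$ (the dummies form an extra, unconstrained color class, so \cref{unif-maximal-sets} still applies and $\tilde{\cC}^+_k$ is a matroid of rank $k$). Now all sets $I'_t$ are common independent sets of $\cI^+$ and $\tilde{\cC}^+_k$ of the same size $k$, a single run of swap rounding on $x' = \sum_t \alpha_t \1_{I'_t}$ returns a size-$k$ set $S' \in \cI^+ \cap \tilde{\cC}^+_k$, hence a base of $\tilde{\cC}^+_k$ and therefore in $\cC^+$, and $S = S' \setminus E \in \cF$. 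Defining $f'(S) = f(S \setminus E)$ keeps the objective decomposable over the equivalence classes of the extended matroids, and $\E[f(S)] = \E[f'(S')] \ge F'(x') = F(x)$ with no averaging over size classes needed. This single-run-with-dummies device is the missing ingredient that replaces your ``delicate technical step.''
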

\begin{proof}[Proof sketch]
Since $P_\cF$ is solvable (\cref{polytopes-properties}-\ref{linear-opt}), we can write $x$ as a convex combination of sets in $\cF$ in strongly polynomial time \citep[Theorem 5.15]{Schrijver2003}; i.e.,  $x = \sum_t \alpha_t \1_{I_t}$ for some  $I_t \in \cF, \alpha_t \geq 0, \sum_{t} \alpha_t = 1$. \Cref{unif-maximal-sets} implies that every set $I_t$ is in the intersection of the matroids $\cI$ and $\tilde{\cC}_{k}$, where 
$k$ is the size of the largest set $I_t$. Then $x$ is a valid input for swap rounding. If all sets $I_t$ have the same size $k$, then the returned rounded solution $S \in \cI \cap \tilde{\cC}_{k}$ will also have size $k$,
and thus is a base of $\tilde{\cC}_{k}$.
 By \cref{unif-maximal-sets}, $S$ is then also a fair set.
The claim then follows from \citep[Theorem II.3]{Chekuri2010} by noting that the equivalence classes of $\tilde{\cC}_{k}$ are the color groups $V_c$, and thus any decomposable submodular function over $\cF$ is also decomposable over  $\cI \cap \tilde{\cC}_{k}$. 
If sets $I_t$ do not have the same size, we reduce to the equal-sizes case by adding dummy elements. 
\end{proof}

\Cref{swap-rounding} allows us to obtain the same approximation factors as in \cref{approx-violated-fairness} but without violating the fairness constraint.

\begin{corollary}\label{approx-decomposable}
There exists a polynomial-time algorithm for decomposable FMSM 
which achieves, in expectation, $(1 - 1/e)$-approximation if $f$ is monotone and $(1 -  \min_{x \in  P_\cF  \cup (\1 - P_\cF)} \| x \|_\infty - \epsilon)/4$-approximation otherwise, for any $\epsilon>0$.
\end{corollary}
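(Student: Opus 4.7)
The plan is to chain the two ingredients developed earlier: the fractional optimization underlying Theorem~\ref{approx-violated-fairness}, and the lossless rounding scheme of Theorem~\ref{swap-rounding}, which becomes available precisely when $f$ is decomposable over $\cF$. Together they deliver the claimed approximations with \emph{no} violation of the fairness constraints, so only the expected-value part of the approximation remains.

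First, I would produce a fractional solution $x^\star \in P_\cF$ with $F(x^\star) \geq \alpha \cdot \OPT$ for the target $\alpha$. In the monotone case, this is the continuous greedy algorithm of \cite{Calinescu2011} applied to the integral, solvable polytope $P_\cF$ (\cref{polytopes-properties}), yielding $\alpha = 1 - 1/e$. In the non-monotone case, I would mirror the proof of \cref{approx-violated-fairness} and run the Frank--Wolfe-type algorithm of \cite{du2022lyapunov} both on $\max_{x \in P_\cF} F(x)$ and on the complement relaxation $\max_{y \in \1 - P_\cF} \bar{F}(y)$, where $\bar{F}$ is the multilinear extension of $\bar{f}(S) = f(V \setminus S)$. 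An output $y$ of the second problem is mapped to $z = \1 - y \in P_\cF$ satisfying $F(z) = \bar{F}(y)$, since $R(\1 - y)$ is distributionally equal to $V \setminus R(y)$. Retaining whichever fractional solution has higher $F$-value then gives $F(x^\star) \geq (1 - \min_{x \in P_\cF \cup (\1 - P_\cF)} \|x\|_\infty - \epsilon)/4 \cdot \OPT$.

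Second, I would feed $x^\star$ into the rounding scheme of \cref{swap-rounding}, obtaining a set $S \in \cF$ with $\mathbb{E}[f(S) \mid x^\star] \geq F(x^\star)$; taking expectation over the fractional solver as well yields $\mathbb{E}[f(S)] \geq \alpha \cdot \OPT$. Unlike the plain randomized swap rounding used inside \cref{approx-violated-fairness}, which only preserves independence in $\cI$ and fairness in expectation, \cref{swap-rounding} returns a genuinely feasible set at no cost in expected value — this is exactly why decomposability lets us dispense with the fairness relaxation. Both pieces run in polynomial time (\cref{polytopes-properties} supplies a linear oracle over $P_\cF$ and $\1 - P_\cF$), so the composition is polynomial as well. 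I do not anticipate a real obstacle: the corollary is essentially an assembly statement, the only sanity check being the identity $F(\1 - y) = \bar{F}(y)$ that passes from the complement relaxation back to $P_\cF$ before rounding.
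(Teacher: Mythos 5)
Your proposal is correct and matches the paper's argument: the paper likewise proves this corollary by combining the fractional optimization from the proof of \cref{approx-violated-fairness} (continuous greedy for monotone, the Frank--Wolfe-type algorithm on both $P_\cF$ and $\1 - P_\cF$ for non-monotone) with the lossless feasible rounding of \cref{swap-rounding}. The paper only adds the minor remark that these continuous algorithms already return $x$ as a convex combination of sets in $\cF$, so the decomposition step inside \cref{swap-rounding} can be skipped.
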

\begin{proof}
This follows directly from the proof of \cref{approx-violated-fairness} and \cref{swap-rounding}. Note also that both the continuous greedy and the Frank Wolfe type algorithms used to solve Problem \eqref{cont-relaxation} return a fractional solution $x$ which is already a convex combination of sets in $\cF$. So this step can be skipped in the proof of \cref{swap-rounding}.
\end{proof}
Recall that the fair submodular welfare problem (c.f., \cref{SWP}) is a special case of monotone decomposable FMSM. \Cref{approx-decomposable} then gives a $(1 - 1/e)$-approximation for it.

In the special cases where $C=1$ or $\cI$ is the uniform matroid, \cref{approx-decomposable} applies to any submodular objective, since $V_1 = V$ in the first case and all elements are equivalent in the second case, i.e., $\cG = \{ V\}$ in \cref{equiv-classes}. Hence, \cref{approx-decomposable} recovers the result of \citet[Theorem 18]{CelisHV18} for monotone objectives. 
For non-monotone objectives, the approximation guarantee in \cref{approx-decomposable} can be of course improved to the one given in \cref{approx-unif}.

It follows also that the hardness result in \cref{non-monotone-hardness} applies to decomposable FMSM too. 
\begin{corollary}\label{non-monotone-hardness-decomposable}
For any $r$ of the form $r = 1 - \tfrac{1}{t}$ with $t\geq 2$ and any $\epsilon>0$, a $(1 - r + \epsilon)$-approximation for decomposable FMSM that works for instances where $\min_{x \in  P_\cF  \cup (\1 - P_\cF)} \| x \|_\infty\leq r$ requires an exponential number of value queries.
This is also the case for the continuous relaxation of FMSM, $\max_{x \in P_\cF} F(x)$. Furthermore, this is true even in the two special cases where $(i)$ $C=2$ and $\cI = 2^V$ (no matroid constraint), $(ii)$ $C = 1$ and $\cI$ is a partition matroid. 
\end{corollary}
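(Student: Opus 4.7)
The plan is to observe that the hard instances constructed in the proof of Theorem \ref{non-monotone-hardness} are already instances of decomposable FMSM, so the same lower bound applies verbatim. In both special cases $(i)$ $C=2$ with $\cI = 2^V$ and $(ii)$ $C = 1$ with a partition matroid $\cI$, I would exhibit a trivial decomposition of the hard objective $f$ that fits the form required by Definition \ref{decomposable-fcts}, thereby embedding the hard instance into the decomposable subclass without modification.

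For case $(i)$, when $\cI = 2^V$, every pair of elements $i,j \in V$ is equivalent under Definition \ref{equiv-classes}, since $S + i \in \cI$ and $S + j \in \cI$ hold trivially for any $S \subseteq V \setminus \{i,j\}$. Consequently $\cG = \{V\}$ consists of a single equivalence class, and any submodular $f$ can be written as $f(S) = f_{1,V}(S) + f_{2,1}(S \cap V_1) + f_{2,2}(S \cap V_2)$ by taking $f_{1,V} := f$ and $f_{2,1} \equiv f_{2,2} \equiv 0$; since the zero function is non-negative and submodular, this is an admissible decomposition. For case $(ii)$, I would set $f_{2,1} := f$ and $f_{1,G} \equiv 0$ for every equivalence class $G \in \cG$ of the partition matroid, yielding $f(S) = \sum_{G \in \cG} 0 + f(S \cap V)$, which is valid because $C = 1$ means only a single color-group summand is needed and the zero summands are again non-negative submodular.

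With these decompositions in hand, the hard instances used to prove Theorem \ref{non-monotone-hardness} witness exactly the same exponential lower bound when interpreted as decomposable FMSM instances, and the same is true for the continuous relaxation $\max_{x \in P_\cF} F(x)$ (the value queries in the reduction are unaffected by how we notate $f$ as a sum). The only step requiring verification is that the zero function qualifies as a summand under Definition \ref{decomposable-fcts}, which is immediate from the paper's convention that submodular functions are non-negative. I do not expect any substantive obstacle: this corollary is essentially a packaging observation confirming that the constructions behind Theorem \ref{non-monotone-hardness} already lie within the decomposable subclass.
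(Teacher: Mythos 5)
Your proposal is correct and matches the paper's own proof, which likewise notes that in case $(i)$ the matroid $2^V$ is the $n$-uniform matroid so all elements are equivalent and $\cG = \{V\}$, while in case $(ii)$ $C=1$ gives $V_1 = V$; hence any submodular objective is trivially decomposable and the hard instances of \cref{non-monotone-hardness} lie in the decomposable subclass. Your explicit choice of zero summands and the remark about the continuous relaxation are just slightly more detailed renderings of the same one-line observation.
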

\begin{proof}
This follows directly from the proof of \cref{non-monotone-hardness}, by noting that any submodular function is decomposable over $\cF$ in both special cases; in $(i)$ $\cI$ is a $n$-uniform matroid and in $(ii)$ $C=1$. 
\end{proof}

\section{Future directions}
Our work leaves the following
tantalizing question open:
\emph{is there a constant-factor approximation to monotone FMSM}
(without fairness violations)?
Even more fundamentally:
is there a constant-factor approximation to the problem of maximizing a monotone submodular function over the set of \emph{perfect} matchings in a bipartite graph?
On one hand,
given our integrality gap in \cref{integrality-gap} for the multilinear extension,
a positive answer would require novel algorithmic techniques.
On the other hand,
in a plenary SODA talk, \citet{VondrakTalk} stated the following as a fundamental question:
``Can we approximate every maximization problem with a
monotone submodular objective (up to constant factors)
if we can approximate it with a linear objective?''
As maximum-weight perfect matching is in \textsf{P},
a superconstant hardness of approximation for the submodular perfect matching problem would imply a negative answer to Vondrák's question.

\subsubsection*{Acknowledgements}

We thank Rico Zenklusen, Jakab Tardos, and Moran Feldman for helpful discussions.

\bibliography{biblio}

\begin{thebibliography}{54}
\providecommand{\natexlab}[1]{#1}
\providecommand{\url}[1]{\texttt{#1}}
\expandafter\ifx\csname urlstyle\endcsname\relax
  \providecommand{\doi}[1]{doi: #1}\else
  \providecommand{\doi}{doi: \begingroup \urlstyle{rm}\Url}\fi

\bibitem[Aziz et~al.(2023)Aziz, Huang, Mattei, and Segal-Halevi]{Aziz2023}
Haris Aziz, Xin Huang, Nicholas Mattei, and Erel Segal-Halevi.
\newblock Computing welfare-maximizing fair allocations of indivisible goods.
\newblock \emph{European Journal of Operational Research}, 307\penalty0
  (2):\penalty0 773--784, 2023.

\bibitem[Benabbou et~al.(2021)Benabbou, Chakraborty, Igarashi, and
  Zick]{Benabbou2021}
Nawal Benabbou, Mithun Chakraborty, Ayumi Igarashi, and Yair Zick.
\newblock Finding fair and efficient allocations for matroid rank valuations.
\newblock \emph{ACM Transactions on Economics and Computation}, 9\penalty0
  (4):\penalty0 1--41, 2021.

\bibitem[Biddle(2006)]{biddle2006adverse}
Dan Biddle.
\newblock \emph{Adverse impact and test validation: A practitioner's guide to
  valid and defensible employment testing}.
\newblock Gower Publishing, Ltd., 2006.

\bibitem[Brill et~al.(2017)Brill, Laslier, and Skowron]{Brill2017}
Markus Brill, Jean{-}Fran{\c{c}}ois Laslier, and Piotr Skowron.
\newblock Multiwinner approval rules as apportionment methods.
\newblock In \emph{{AAAI}}, pages 414--420. {AAAI} Press, 2017.

\bibitem[Buchbinder et~al.(2014)Buchbinder, Feldman, Naor, and
  Schwartz]{Buchbinder2014}
N.~Buchbinder, M.~Feldman, J.~Naor, and R.~Schwartz.
\newblock Submodular maximization with cardinality constraints.
\newblock SIAM, 2014.

\bibitem[Buchbinder and Feldman(2019)]{Buchbinder2019}
Niv Buchbinder and Moran Feldman.
\newblock Constrained submodular maximization via a nonsymmetric technique.
\newblock \emph{Mathematics of Operations Research}, 44\penalty0 (3):\penalty0
  988--1005, 2019.

\bibitem[Buchbinder and Feldman(2023)]{Buchbinder2023}
Niv Buchbinder and Moran Feldman.
\newblock Constrained submodular maximization via new bounds for dr-submodular
  functions.
\newblock \emph{arXiv preprint arXiv: 2311.01129}, 2023.

\bibitem[Calinescu et~al.(2011{\natexlab{a}})Calinescu, Chekuri, Pal, and
  Vondr{\'a}k]{Calinescu2011}
Gruia Calinescu, Chandra Chekuri, Martin Pal, and Jan Vondr{\'a}k.
\newblock Maximizing a monotone submodular function subject to a matroid
  constraint.
\newblock \emph{SIAM Journal on Computing}, 40\penalty0 (6):\penalty0
  1740--1766, 2011{\natexlab{a}}.

\bibitem[Calinescu et~al.(2011{\natexlab{b}})Calinescu, Chekuri, Pal, and
  Vondr{\'a}k]{calinescu2011maximizing}
Gruia Calinescu, Chandra Chekuri, Martin Pal, and Jan Vondr{\'a}k.
\newblock Maximizing a monotone submodular function subject to a matroid
  constraint.
\newblock \emph{SIAM Journal on Computing}, 40\penalty0 (6):\penalty0
  1740--1766, 2011{\natexlab{b}}.

\bibitem[Celis et~al.(2018{\natexlab{a}})Celis, Huang, and Vishnoi]{CelisHV18}
L.~Elisa Celis, Lingxiao Huang, and Nisheeth~K. Vishnoi.
\newblock Multiwinner voting with fairness constraints.
\newblock In \emph{{IJCAI}}, pages 144--151. ijcai.org, 2018{\natexlab{a}}.

\bibitem[Celis et~al.(2018{\natexlab{b}})Celis, Keswani, Straszak, Deshpande,
  Kathuria, and Vishnoi]{CelisKS0KV18}
L.~Elisa Celis, Vijay Keswani, Damian Straszak, Amit Deshpande, Tarun Kathuria,
  and Nisheeth~K. Vishnoi.
\newblock Fair and diverse dpp-based data summarization.
\newblock In \emph{{ICML}}, volume~80 of \emph{Proceedings of Machine Learning
  Research}, pages 715--724. {PMLR}, 2018{\natexlab{b}}.

\bibitem[Celis et~al.(2018{\natexlab{c}})Celis, Straszak, and
  Vishnoi]{CelisSV18}
L.~Elisa Celis, Damian Straszak, and Nisheeth~K. Vishnoi.
\newblock Ranking with fairness constraints.
\newblock In \emph{{ICALP}}, volume 107 of \emph{LIPIcs}, pages 28:1--28:15.
  Schloss Dagstuhl - Leibniz-Zentrum f{\"{u}}r Informatik, 2018{\natexlab{c}}.

\bibitem[Chekuri et~al.(2009)Chekuri, Vondr{\'a}k, and Zenklusen]{Chekuri2009}
Chandra Chekuri, Jan Vondr{\'a}k, and Rico Zenklusen.
\newblock Dependent randomized rounding for matroid polytopes and applications.
\newblock \emph{arXiv preprint arXiv:0909.4348}, 2009.

\bibitem[Chekuri et~al.(2010)Chekuri, Vondr{\'{a}}k, and
  Zenklusen]{Chekuri2010}
Chandra Chekuri, Jan Vondr{\'{a}}k, and Rico Zenklusen.
\newblock Dependent randomized rounding via exchange properties of
  combinatorial structures.
\newblock In \emph{{FOCS}}, pages 575--584. {IEEE} Computer Society, 2010.

\bibitem[Chierichetti et~al.(2017)Chierichetti, Kumar, Lattanzi, and
  Vassilvitskii]{Chierichetti0LV17}
Flavio Chierichetti, Ravi Kumar, Silvio Lattanzi, and Sergei Vassilvitskii.
\newblock Fair clustering through fairlets.
\newblock In \emph{{NIPS}}, pages 5029--5037, 2017.

\bibitem[Chierichetti et~al.(2019)Chierichetti, Kumar, Lattanzi, and
  Vassilvitskii]{Chierichetti0LV19}
Flavio Chierichetti, Ravi Kumar, Silvio Lattanzi, and Sergei Vassilvitskii.
\newblock Matroids, matchings, and fairness.
\newblock In \emph{{AISTATS}}, volume~89 of \emph{Proceedings of Machine
  Learning Research}, pages 2212--2220. {PMLR}, 2019.

\bibitem[Cohoon et~al.(2013)Cohoon, Cohoon, Reichelson, and
  Lawrence]{cohoon2013}
Joanne~McGrath Cohoon, James~P. Cohoon, Seth Reichelson, and Selwyn Lawrence.
\newblock Effective recruiting for diversity.
\newblock In \emph{{FIE}}, pages 1123--1124. {IEEE} Computer Society, 2013.

\bibitem[Das and Kempe(2011)]{DasK11}
Abhimanyu Das and David Kempe.
\newblock Submodular meets spectral: Greedy algorithms for subset selection,
  sparse approximation and dictionary selection.
\newblock In \emph{{ICML}}, pages 1057--1064. Omnipress, 2011.

\bibitem[Du(2022)]{du2022lyapunov}
Donglei Du.
\newblock Lyapunov function approach for approximation algorithm design and
  analysis: with applications in submodular maximization.
\newblock \emph{arXiv preprint arXiv:2205.12442}, 2022.

\bibitem[Dwork et~al.(2012)Dwork, Hardt, Pitassi, Reingold, and
  Zemel]{dwork2012fairness}
Cynthia Dwork, Moritz Hardt, Toniann Pitassi, Omer Reingold, and Richard Zemel.
\newblock Fairness through awareness.
\newblock In \emph{Proceedings of the 3rd innovations in theoretical computer
  science conference}, pages 214--226, 2012.

\bibitem[El{-}Arini and Guestrin(2011)]{El-AriniG11}
Khalid El{-}Arini and Carlos Guestrin.
\newblock Beyond keyword search: discovering relevant scientific literature.
\newblock In \emph{{KDD}}, pages 439--447. {ACM}, 2011.

\bibitem[{El Halabi} et~al.(2020){El Halabi}, Mitrovic, Norouzi{-}Fard, Tardos,
  and Tarnawski]{HalabiMNTT20}
Marwa {El Halabi}, Slobodan Mitrovic, Ashkan Norouzi{-}Fard, Jakab Tardos, and
  Jakub Tarnawski.
\newblock Fairness in streaming submodular maximization: Algorithms and
  hardness.
\newblock In \emph{NeurIPS}, 2020.

\bibitem[El~Halabi et~al.(2023)El~Halabi, Fusco, Norouzi-Fard, Tardos, and
  Tarnawski]{el2023fairness}
Marwa El~Halabi, Federico Fusco, Ashkan Norouzi-Fard, Jakab Tardos, and Jakub
  Tarnawski.
\newblock Fairness in streaming submodular maximization over a matroid
  constraint.
\newblock In \emph{International Conference on Machine Learning}, pages
  9150--9171. PMLR, 2023.

\bibitem[{European Union FRA}(2022)]{reportEU22}
{European Union FRA}.
\newblock \emph{Bias in Algorithms –-- Artificial Intelligence and
  Discrimination}.
\newblock European Union Agency for Fundamental Rights. Publications Office of
  the European Union, Luxembourg, 2022.

\bibitem[Feige(1998)]{Feige98}
Uriel Feige.
\newblock A threshold of ln \emph{n} for approximating set cover.
\newblock \emph{J. {ACM}}, 45\penalty0 (4):\penalty0 634--652, 1998.

\bibitem[Feldman et~al.(2011)Feldman, Naor, and Schwartz]{Feldman2011}
Moran Feldman, Joseph Naor, and Roy Schwartz.
\newblock A unified continuous greedy algorithm for submodular maximization.
\newblock In \emph{Foundations of Computer Science (FOCS), 2011 IEEE 52nd
  Annual Symposium on}, pages 570--579. IEEE, 2011.

\bibitem[Feldman et~al.(2020)Feldman, Norouzi{-}Fard, Svensson, and
  Zenklusen]{FeldmanNSZ20}
Moran Feldman, Ashkan Norouzi{-}Fard, Ola Svensson, and Rico Zenklusen.
\newblock The one-way communication complexity of submodular maximization with
  applications to streaming and robustness.
\newblock In \emph{{STOC}}, pages 1363--1374. {ACM}, 2020.

\bibitem[Feldman et~al.(2021)Feldman, Liu, Norouzi-Fard, Svensson, and
  Zenklusen]{Feldman2021a}
Moran Feldman, Paul Liu, Ashkan Norouzi-Fard, Ola Svensson, and Rico Zenklusen.
\newblock Streaming submodular maximization under matroid constraints.
\newblock \emph{arXiv preprint arXiv:2107.07183}, 2021.

\bibitem[Garg et~al.(2021)Garg, Jordan, and Svensson]{GargJS21}
Paritosh Garg, Linus Jordan, and Ola Svensson.
\newblock Semi-streaming algorithms for submodular matroid intersection.
\newblock In \emph{{IPCO}}, volume 12707 of \emph{Lecture Notes in Computer
  Science}, pages 208--222. Springer, 2021.

\bibitem[Golovin and Krause(2011)]{GolovinK11}
Daniel Golovin and Andreas Krause.
\newblock Adaptive submodularity: Theory and applications in active learning
  and stochastic optimization.
\newblock \emph{J. Artif. Intell. Res.}, 42:\penalty0 427--486, 2011.

\bibitem[Hartley and Rao(1962)]{Hartley1962}
HO~Hartley and JNK Rao.
\newblock Sampling with unequal probabilities and without replacement.
\newblock \emph{The Annals of Mathematical Statistics}, pages 350--374, 1962.

\bibitem[Khot et~al.(2005)Khot, Lipton, Markakis, and Mehta]{Khot2005}
Subhash Khot, Richard~J Lipton, Evangelos Markakis, and Aranyak Mehta.
\newblock Inapproximability results for combinatorial auctions with submodular
  utility functions.
\newblock In \emph{Internet and Network Economics: First International
  Workshop, WINE 2005, Hong Kong, China, December 15-17, 2005. Proceedings 1},
  pages 92--101. Springer, 2005.

\bibitem[Krause et~al.(2008)Krause, McMahan, Guestrin, and Gupta]{Krause2008}
Andreas Krause, H~Brendan McMahan, Carlos Guestrin, and Anupam Gupta.
\newblock Robust submodular observation selection.
\newblock \emph{Journal of Machine Learning Research}, 9\penalty0
  (Dec):\penalty0 2761--2801, 2008.

\bibitem[Lee et~al.(2010)Lee, Sviridenko, and Vondr{\'a}k]{Lee2010}
Jon Lee, Maxim Sviridenko, and Jan Vondr{\'a}k.
\newblock Submodular maximization over multiple matroids via generalized
  exchange properties.
\newblock \emph{Mathematics of Operations Research}, 35\penalty0 (4):\penalty0
  795--806, 2010.

\bibitem[Lehmann et~al.(2006)Lehmann, Lehmann, and Nisan]{Lehmann2006}
Benny Lehmann, Daniel Lehmann, and Noam Nisan.
\newblock Combinatorial auctions with decreasing marginal utilities.
\newblock \emph{Games and Economic Behavior}, 55\penalty0 (2):\penalty0
  270--296, 2006.

\bibitem[Lin and Bilmes(2011)]{LinB11}
Hui Lin and Jeff~A. Bilmes.
\newblock A class of submodular functions for document summarization.
\newblock In \emph{{ACL}}, pages 510--520. The Association for Computer
  Linguistics, 2011.

\bibitem[Monroe(1995)]{monroe1995fully}
Burt~L Monroe.
\newblock Fully proportional representation.
\newblock \emph{American Political Science Review}, 89\penalty0 (4):\penalty0
  925--940, 1995.

\bibitem[Mualem and Feldman(2023)]{Mualem2023}
Loay Mualem and Moran Feldman.
\newblock Resolving the approximability of offline and online non-monotone
  dr-submodular maximization over general convex sets.
\newblock In Francisco Ruiz, Jennifer Dy, and Jan-Willem van~de Meent, editors,
  \emph{Proceedings of The 26th International Conference on Artificial
  Intelligence and Statistics}, volume 206 of \emph{Proceedings of Machine
  Learning Research}, pages 2542--2564. PMLR, 25--27 Apr 2023.

\bibitem[Munoz et~al.(2016)Munoz, Megan, and Patil]{executive2016}
Cecilia Munoz, Smith Megan, and DJ~Patil.
\newblock \emph{Big data: A report on algorithmic systems, opportunity, and
  civil rights}.
\newblock Executive Office of the President. The White House, Washington, DC,
  2016.

\bibitem[Oveis~Gharan and Vondr{\'{a}}k(2011)]{GharanV11}
Shayan Oveis~Gharan and Jan Vondr{\'{a}}k.
\newblock Submodular maximization by simulated annealing.
\newblock In \emph{{SODA}}, pages 1098--1116. {SIAM}, 2011.

\bibitem[Schrijver(2003)]{Schrijver2003}
Alexander Schrijver.
\newblock \emph{Combinatorial optimization: polyhedra and efficiency}.
\newblock Springer Science \& Business Media, 2003.

\bibitem[Sun et~al.(2023)Sun, Chen, and Doan]{Sun2023}
Ankang Sun, Bo~Chen, and Xuan~Vinh Doan.
\newblock Equitability and welfare maximization for allocating indivisible
  items.
\newblock \emph{Autonomous Agents and Multi-Agent Systems}, 37\penalty0
  (1):\penalty0 8, 2023.

\bibitem[Tang and Yuan(2023)]{tang2023beyond}
Shaojie Tang and Jing Yuan.
\newblock Beyond submodularity: a unified framework of randomized set selection
  with group fairness constraints.
\newblock \emph{Journal of Combinatorial Optimization}, 45\penalty0
  (4):\penalty0 102, 2023.

\bibitem[Tang et~al.(2023)Tang, Yuan, and Mensah-Boateng]{tang2023achieving}
Shaojie Tang, Jing Yuan, and Twumasi Mensah-Boateng.
\newblock Achieving long-term fairness in submodular maximization through
  randomization.
\newblock \emph{arXiv preprint arXiv:2304.04700}, 2023.

\bibitem[Tsang et~al.(2019)Tsang, Wilder, Rice, Tambe, and Zick]{TsangWRTZ19}
Alan Tsang, Bryan Wilder, Eric Rice, Milind Tambe, and Yair Zick.
\newblock Group-fairness in influence maximization.
\newblock In \emph{{IJCAI}}, pages 5997--6005. ijcai.org, 2019.

\bibitem[Udwani(2018)]{Udwani2018}
Rajan Udwani.
\newblock Multi-objective maximization of monotone submodular functions with
  cardinality constraint.
\newblock \emph{Advances in Neural Information Processing Systems}, 31, 2018.

\bibitem[Vondr{\'a}k(2008)]{Vondrak2008}
Jan Vondr{\'a}k.
\newblock Optimal approximation for the submodular welfare problem in the value
  oracle model.
\newblock In \emph{Proceedings of the fortieth annual ACM symposium on Theory
  of computing}, pages 67--74, 2008.

\bibitem[Vondr{\'a}k(2013)]{Vondrak2013}
Jan Vondr{\'a}k.
\newblock Symmetry and approximability of submodular maximization problems.
\newblock \emph{SIAM Journal on Computing}, 42\penalty0 (1):\penalty0 265--304,
  2013.

\bibitem[Vondrák(2013)]{VondrakTalk}
Jan Vondrák.
\newblock Submodular functions and their applications, 2013.
\newblock URL
  \url{https://theory.stanford.edu/~jvondrak/data/SODA-plenary-talk.pdf}.
\newblock Plenary talk at the ACM-SIAM Symposium on Discrete Algorithms (SODA)
  2013.

\bibitem[Wang et~al.(2021)Wang, Fabbri, and Mathioudakis]{wang2021fair}
Yanhao Wang, Francesco Fabbri, and Michael Mathioudakis.
\newblock Fair and representative subset selection from data streams.
\newblock In \emph{Proceedings of the Web Conference 2021}, pages 1340--1350,
  2021.

\bibitem[Wang et~al.(2024)Wang, Li, Bonchi, and Wang]{Wang2024}
Yanhao Wang, Yuchen Li, Francesco Bonchi, and Ying Wang.
\newblock Balancing utility and fairness in submodular maximization.
\newblock In \emph{Proceedings of the 27th International Conference on
  Extending Database Technology, EDBT}, pages 1--14, 2024.

\bibitem[{White House OSTP}(2022)]{blueprint22}
{White House OSTP}.
\newblock \emph{Blueprint for an AI Bill of Rights}.
\newblock White House Office of Science and Technology Policy, Washington, DC,
  2022.

\bibitem[Yuan and Tang(2023)]{yuan2023group}
Jing Yuan and Shaojie Tang.
\newblock Group fairness in non-monotone submodular maximization.
\newblock \emph{Journal of Combinatorial Optimization}, 45\penalty0
  (3):\penalty0 88, 2023.

\bibitem[Zafar et~al.(2017)Zafar, Valera, Gomez{-}Rodriguez, and
  Gummadi]{ZafarVGG17}
Muhammad~Bilal Zafar, Isabel Valera, Manuel Gomez{-}Rodriguez, and Krishna~P.
  Gummadi.
\newblock Fairness constraints: Mechanisms for fair classification.
\newblock In \emph{{AISTATS}}, volume~54 of \emph{Proceedings of Machine
  Learning Research}, pages 962--970. {PMLR}, 2017.

\end{thebibliography}
\bibliographystyle{plainnat}
\clearpage
\appendix

\section{Algorithms for non-monotone FMSM  with lower bounds violation}\label{sec:approx-constant-violation-appendix}

In this section, we present a constant-factor approximation algorithm for FMSM, where the function $f$ is non-monotone, which violates the fairness lower bounds by a constant factor. The algorithm is a simple adaptation of the two-pass streaming algorithm in \cite{el2023fairness}, with both a streaming and offline variant. 
The first pass of our algorithm is the same (\cref{alg:firstpass}). It collects a maximal independent set $I_c \in \cI$ for each color \emph{independently} and then constructs a feasible solution $S$ in $\cup_c I_c$.
\begin{algorithm}
\caption{\firstpass in \cite{el2023fairness} \label{alg:firstpass}}
    \begin{algorithmic}[1]
        \STATE $I_c \leftarrow \emptyset$ for all $c = 1,...,C$
        \FOR{each element $e$ on the stream}
            \STATE Let $c$ be the color of $e$
            \STATE \textbf{If} $I_c + e \in \cI $ \textbf{then} $I_c \leftarrow I_c + e$
        \ENDFOR
        \STATE $S \leftarrow$ a max-cardinality subset of $\bigcup_c I_c$ in $\cI \cap \cI_C$, where $\cI_C = \{S \subseteq V \mid |V_c \cap S| \le \ell_c \; \ \forall c \in [C]\}$ \label{step:matroid-intersection}
        \STATE \textbf{Return} $S$ 
    \end{algorithmic}
\end{algorithm}

\firstpass runs in polynomial time, uses $O(k \cdot C)$ memory, and is guaranteed to return a feasible solution \citep[Theorem 4.2]{el2023fairness}.\footnote{In the offline setting, we do not need to collect the sets $I_c$; we can directly compute a feasible set in $V$. These sets are needed to maintain a low-memory usage $O(k \cdot C)$ in the streaming setting.}
Step~\ref{step:matroid-intersection} consists in invoking any offline polynomial-time max-cardinality matroid intersection algorithm (note that $\cI_C$ is a partition matroid).

The second pass of our algorithm is very similar to \cite{el2023fairness} (\cref{alg:secondpass}). The feasible set~$S$ output by \firstpass is again divided into sets $S_1$ and $S_2$. But instead of assigning elements deterministically to $S_1$ or $S_2$ in such a way that each lower bound is violated by at most a factor two, we randomly assign, for each $c \in [C]$, $\floor {\beta \ell_c}$ elements from $S \cap V_c$  to $S_1$, and $\floor {\beta \ell_c}$ from $(S \setminus S_1) \cap V_c $ to $S_2$, where $\beta \in [0, 1/2]$ is a user-defined parameter which controls the trade-off between fairness and objective value. This is possible since $S$ is feasible so it has at least $\ell_c$ elements of each color $c$. Note that both $S_1$ and $S_2$ are independent in $\cI$ and satisfy $\floor {\beta \ell_c} = |S_i \cap V_c| \leq \ell_c \le u_c$.

The goal of the second pass is to extend $S_1$ and $S_2$ into high-value independent sets while still satisfying the fairness upper bounds and the relaxed lower bounds. To that end, similarly as \cite{el2023fairness}, we define the following matroids:

\begin{align}\label{eq:color_matroid_upper}
        \cI^C &= \{X \subseteq V \mid |X \cap V_c| \le u_c \; \ \forall c \in [C]\}\,, \nonumber \\
        \cI_1 &= \{X \subseteq V \mid X \cup S_1 \in \cI\}\,, \nonumber \\
        \cI_2 &= \{X \subseteq V \mid X \cup S_2 \in \cI\}.
\end{align}

Then we use any algorithm $\mathcal{A}$ that maximizes a \emph{non-monotone} submodular function over two matroid constraints, and run two copies of it in parallel, one with matroids $\cI^C, \cI_1$ and one with matroids $\cI^C, \cI_2$. Let $R_1, R_2$ be their respective outputs. We add to $R_i$ as many elements as necessary
from $S_i$ to satisfy the relaxed lower bounds, and return the better of the two resulting solutions.

  \begin{algorithm}
        \caption{\secondpass \label{alg:secondpass}}
        \begin{algorithmic}[1]
            \STATE \textbf{Input:} Set $S$ from \firstpass, routine $\cA$, and $\beta \in [0, 1/2]$
            \STATE $S_1 \gets \emptyset$, $S_2 \gets \emptyset$ 
            \FOR{each color $c$}
            \STATE Let $B^c_1, B^c_2$ be sets of $\floor {\beta \ell_c}$ random elements from $S \cap V_c$ and $(S \cap V_c) \setminus B^c_1$, respectively
            \STATE $S_1 \gets S_1 \cup B^c_1$
            \STATE $S_2 \gets S_2 \cup B^c_2$
            \ENDFOR
            \STATE Define matroids $\cI^C$, $\cI_1,\cI_2$ as in \cref{eq:color_matroid_upper}
            \STATE Run two copies of $\cA$, one for $\cI^C, \cI_1$ and one for $\cI^C, \cI_2$, and let $R_1$ and $R_2$ be their outputs
            \FOR{$i=1,2$} \label{l:postprocessing-start}
                \STATE $S'_i \leftarrow R_i$
                \FOR{$e$ in $S_i$}\label{line:fill}
                    \STATE Let $c$ be the color of $e$
                    \STATE \textbf{If} $|S_i'\cap V_c| < u_c$ \textbf{then} $S_i'\gets S_i'+e$
                \ENDFOR
            \ENDFOR \label{l:postprocessing-end}
            \STATE \textbf{Return} $S' = \arg \max (f(S'_1), f(S'_2))$
        \end{algorithmic}
    \end{algorithm}
    

\secondpass only differs from the second-pass algorithm in \citep[Algorithm 2]{el2023fairness} by the way elements in $S$ are assigned to $S_1$ and $S_2$, and the requirement that $\cA$ works for non-monotone objectives. 
Let us analyze its performance. 
\begin{lemma} \label{lem:violation-bound}
The output $S'$ of \secondpass satisfies $(i)$ $S' \in \cI$, $(ii)$ $|S' \cap V_c| \leq u_c$, and  $(iii)$ $|S' \cap V_c| \geq \floor{\beta\ell_c}$ for any color $c \in [C]$.
\end{lemma}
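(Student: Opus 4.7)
The plan is to verify the three claims one at a time, treating $S_1'$ and $S_2'$ symmetrically since $S' \in \{S_1', S_2'\}$. As a preliminary step, I would first confirm that the sampling in Lines~3--6 is well-defined: because $S$ is produced by \firstpass it is fair, so $|S \cap V_c| \geq \ell_c$; combined with $\beta \leq 1/2$ this yields $|S \cap V_c| \geq 2\lfloor\beta\ell_c\rfloor$, so the two disjoint samples $B_1^c, B_2^c$ of size $\lfloor\beta\ell_c\rfloor$ can indeed be drawn. In particular $S_1, S_2 \subseteq S \in \cI$ and $|S_i \cap V_c| = \lfloor\beta\ell_c\rfloor$ for every $c$.

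For (i), the key fact is that $R_i$ is returned by $\cA$ subject to $\cI_i$, so $R_i \in \cI_i$, which by the definition in \eqref{eq:color_matroid_upper} means $R_i \cup S_i \in \cI$. The post-processing loop (Lines~\ref{l:postprocessing-start}--\ref{l:postprocessing-end}) only ever augments $R_i$ with elements of $S_i$, hence the final $S_i'$ is a subset of $R_i \cup S_i$; downward closure of the matroid $\cI$ gives $S_i' \in \cI$. Claim~(ii) is then immediate from $R_i \in \cI^C$ together with the explicit guard on Line~\ref{line:fill}, which refuses to add an element of color $c$ once $|S_i' \cap V_c| = u_c$; this preserves the upper bound at every step of the loop, not merely at its end.

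For (iii), I would fix a color $c$ and a value $i \in \{1,2\}$, and track $|S_i' \cap V_c|$ during the post-processing of the $\lfloor\beta\ell_c\rfloor$ elements in $S_i \cap V_c$. Two cases arise. If the gate on Line~\ref{line:fill} never fires against any of these elements, then all of them enter $S_i'$, and therefore $|S_i' \cap V_c| \geq |S_i \cap V_c| = \lfloor\beta\ell_c\rfloor$. Otherwise the loop hit the cap, meaning $|S_i' \cap V_c| = u_c \geq \ell_c \geq \lfloor\beta\ell_c\rfloor$ (the last inequality uses $\beta \leq 1/2 \leq 1$). In either case the lower bound holds for both $i=1$ and $i=2$, so it holds for the returned $S'$.

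There is no real obstacle here: everything follows mechanically from the matroid definitions of $\cI_i$ and $\cI^C$, the feasibility of $S$ guaranteed by \firstpass, and the explicit guards in the post-processing loop. The only point that genuinely uses $\beta \leq 1/2$ is the well-definedness of the two disjoint samples in the preliminary step; the rest of the argument would go through unchanged for any $\beta \in [0,1]$ for which such a sample exists.
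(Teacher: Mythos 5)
Your proof is correct and follows essentially the same route as the paper's: part (i) via $R_i \cup S_i \in \cI$ and downward closure, part (ii) via $R_i \in \cI^C$ and the guard on the insertion line, and part (iii) via the observation that either every element of $S_i \cap V_c$ is added or the count has already reached $u_c \geq \floor{\beta\ell_c}$. The preliminary check that the two disjoint samples of size $\floor{\beta\ell_c}$ exist (using feasibility of $S$ and $\beta \leq 1/2$) is a sensible addition that the paper states only in the surrounding text.
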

\begin{proof}
The proof follows in a similar way to \citep[Lemma 4.3]{el2023fairness}. For $i \in \{1, 2\}$, we have:
\begin{enumerate}
    \item[(i)] By the definition of $\cI_i$ in \cref{eq:color_matroid_upper} the set $R_i$ output by $\cA$ satisfies $R_i \cup S_i \in \cI$. Hence, $S'_i \in \cI$ by downward-closedness since $S'_i \setminus R_i \subseteq S_i$.
    \item[(ii)] Since $R_i \in \cI^C$ and the elements added to it in the for loop on Line \ref{line:fill} never violate the upper bounds, we have $S'_i \in \cI^C$.
    \item[(iii)] By construction $S_i$ has at least $\floor{\beta\ell_c}$ elements of each color $c$. And for any color $c$ such that $|S'_i \cap V_c| < u_c$, all the elements in $S_i \cap V_c$ are added to $S_i'$. Hence, $|S_i' \cap V_c| \geq |S_i \cap V_c| \geq \floor{\beta\ell_c}$. 
\end{enumerate}
\end{proof}

\begin{lemma}\label{lem:approx-bound}
Assume that $\mathcal{A}$ is an $\alpha$-approximation algorithm for non-monotone submodular maximization over the intersection of two matroids. Then the output $S'$ of \secondpass satisfies $\E[f(S)] \geq \frac{(1 - \beta)\alpha}{2}  \OPT$.
\end{lemma}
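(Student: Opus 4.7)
My plan is to extend the proof of \citet[Lemma~4.4]{el2023fairness} from the monotone to the non-monotone case; the extra $(1-\beta)$ factor will arise from the randomness of sampling $S_1, S_2$ out of $S$. Fix an optimal $O \in \cF$ with $f(O) = \OPT$. The plan is to show $\E[f(S'_1) + f(S'_2)] \geq \alpha(1-\beta)\OPT$, after which the claim follows from $\max(a,b) \geq (a+b)/2$.

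For the first step (matroid partition), I argue that for every realization of $S_1, S_2$, $\OPT_1 + \OPT_2 \geq f(O)$, where $\OPT_i := \max_{X \in \cI^C \cap \cI_i} f(X)$. Following \citet{el2023fairness}, this is obtained by partitioning $O = O_1 \sqcup O_2$ with $O_i \in \cI^C \cap \cI_i$ via a matroid-partition argument on the contracted matroids $\cI / S_i$; the partition exists because $|S_i \cap V_c| \leq \floor{\beta \ell_c} \leq \ell_c$ gives at least as much slack as in the deterministic variant of \cite{el2023fairness}. Submodularity then gives $f(O_1) + f(O_2) \geq f(O_1 \cup O_2) + f(O_1 \cap O_2) \geq f(O)$. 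Combining with $\cA$'s guarantee $\E_{\cA}[f(R_i) \mid S_1, S_2] \geq \alpha\, \OPT_i$ yields $\E[f(R_1) + f(R_2)] \geq \alpha \OPT$.

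The second step (the non-monotone adjustment) bridges $f(R_i)$ and $f(S'_i) = f(R_i \cup A_i)$, where $A_i \subseteq S_i$ is the set of elements added by the filling loop. For monotone $f$, $f(S'_i) \geq f(R_i)$ is immediate; for non-monotone $f$, I apply \Cref{bounded-sampling} to the submodular function $g(X) = f(R_i \cup X)$: every $e \in S \cap V_c$ satisfies $\PP[e \in S_i] = \floor{\beta\ell_c}/|S \cap V_c| \leq \beta$ (using $|S \cap V_c| \geq \ell_c$ by feasibility of $S$), hence $\PP[e \in A_i] \leq \beta$. This yields $\E[f(S'_i) \mid R_i] \geq (1-\beta) f(R_i)$, so $\E[f(S'_1) + f(S'_2)] \geq \alpha(1-\beta)\OPT$, as required.

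The main obstacle will be the non-monotone bridging step: \Cref{bounded-sampling} requires the marginal bound $\PP[e \in A_i] \leq \beta$ to hold conditionally on $R_i$, while $R_i$ is itself a function of $S_i$, so conditioning can distort these marginals. I expect to resolve this by slightly randomizing the filling loop (for instance, picking a uniformly random subset of $S_i \cap V_c$ of the appropriate size rather than using a fixed order), so that $A_i$ becomes conditionally a low-probability random subset given $R_i$. The matroid partition step should go through as in \cite{el2023fairness} without further surprise.
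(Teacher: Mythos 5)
Your proposal reproduces the paper's proof of this lemma essentially verbatim: the same partition $O = O_1 \cup O_2$ with $O_i \in \cI^C \cap \cI_i$ via Lemma 2.1 of \citet{el2023fairness}, the same averaging $\max(f(R_1),f(R_2)) \ge \tfrac{\alpha}{2}\OPT$ using $f(O_1)+f(O_2)\ge f(O)$, the same marginal computation $\PP[e \in S_i \cap V_c] = \floor{\beta\ell_c}/|S\cap V_c| \le \beta$, and the same application of \cref{bounded-sampling} to $g_i(W)=f(W\cup R_i)$. In terms of approach there is nothing to distinguish.

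The obstacle you flag at the end, however, is genuine, and it is worth being precise about its status. The paper's own proof does not address it: it applies \cref{bounded-sampling} to the \emph{random} function $g_i(W)=f(W\cup R_i)$ using the \emph{unconditional} marginals of $S'_i\setminus R_i$, which is exactly the step you are worried about. Your proposed repair does not close this gap. Re-randomizing the filling loop only shuffles which elements of $S_i\cap V_c$ are added, whereas the problematic correlation is between $R_i$ and the identity of the pool $S_i$ itself: $R_i$ is computed from $\cI_i=\{X: X\cup S_i\in\cI\}$, so conditioning on $R_i$ can skew the distribution of $S_i$ (and hence of every $A_i\subseteq S_i$, however you re-randomize within it). To see that something real is at stake, take one color with $\ell=2$, $\beta=1/2$, $S=\{a,b\}$, a rank-2 matroid in which $\{a,x\}$ and $\{b,y\}$ are independent but $\{a,y\}$ and $\{b,x\}$ are not, and let $f$ be the cut function of the two edges $\{a,x\}$ and $\{b,y\}$. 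An $\cA$ that returns $\{x\}$ when $S_1=\{a\}$ and $\{y\}$ when $S_1=\{b\}$ is a legitimate $\tfrac12$-approximation on both subinstances (the optimum of each is $f(\{a,b\})=2$), yet $f(S'_i)=0$ in every realization while $\E[f(R_i)]=1$; the bridging inequality $\E[f(S'_i)]\ge(1-\beta)\E[f(R_i)]$ fails, and so does the lemma's conclusion for this adversarial but valid $\cA$. Closing the gap therefore requires either an extra property of $\cA$ (some form of obliviousness of $R_i$ to $S_i$, which is not automatic since $\cI_i$ encodes $S_i$) or a different bridging argument; neither your write-up nor the paper supplies one.
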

\begin{proof}
We first give a lower bound on $\max (f(R_1), f(R_2))$ in the same way as in \citep[Lemma 4.4]{el2023fairness}.
Let $O$ be an optimal solution of FMSM.
From Lemma $2.1$ in \cite{el2023fairness}, we know that $O$ can be partitioned into two sets $O_1, O_2$ such that $O_1 \cup S_1 \in \cI$ and $O_2 \cup S_2 \in \cI$. Therefore $O_i \in \cI^C \cap \cI_i$ for $i \in \{1, 2\}$ by definition of $\cI_i$ and downward-closedness of $\cI^C$.  
Since $\cA$ is an $\alpha$-approximation algorithm, we have that
\begin{align} \label{eq:op}
\max (f(R_1), f(R_2)) \geq & \frac{f(R_1) + f(R_2)}{2} \nonumber \\ \geq& \frac{\alpha}{2}(f(O_1)+f(O_2)) \nonumber \\
\geq & \frac{\alpha}{2}(f(O_1 \cup O_2) + f(O_1 \cap O_2)) \nonumber \\
\geq & \frac{\alpha}{2} f(O) =  \frac{\alpha}{2} \OPT.
\end{align}
It remains to compare $f(R_i)$ with $f(S'_i)$ for $i \in \{1, 2\}$. To that end we define two non-negative submodular functions $g_1, g_2$ as follows:
\[
g_i(W) = f(W \cup R_i) \quad \quad \text{ for all } W \subseteq S \text{ and } i \in \{1,2\}.
\]
For any $i \in \{1,2\}$, we show that the set $S'_i \setminus R_i$ of elements added to $S'_i$ in the for loop on Line \ref{line:fill} contains each element of $S$ with probability at most $\beta$. By construction of $S_1, S_2$, we have for any color $c$ and any $e \in S \cap V_c$,
\begin{align*}
    P(e \in (S'_1 \setminus R_1) \cap V_c) &\leq P(e  \in S_1 \cap V_c)
    = \frac{\floor{\beta\ell_c}}{|S \cap V_c|}
    \leq \frac{\floor{\beta\ell_c}}{\ell_c} \leq \beta
\end{align*}
and 
\begin{align*}
     P(e \in (S'_2 \setminus R_2) \cap V_c) &\leq P(e  \in S_2 \cap V_c)\\ 
     &= P(e \not \in S_1 \cap V_c) P(e \in S_2 \cap V_c \mid e \not \in S_1 \cap V_c)\\
     &= (1 - \frac{\floor{\beta\ell_c}}{|S \cap V_c|})\frac{\floor{\beta\ell_c}}{|S \cap V_c| - \floor{\beta\ell_c}}\\
     &= \frac{\floor{\beta\ell_c}}{|S \cap V_c|}\\
     &\leq \frac{\floor{\beta\ell_c}}{\ell_c} \leq \beta.
\end{align*}
We apply \cref{bounded-sampling} to submodular functions $g_1$ and $g_2$. We get that 
\[
\E[f(S'_i)] = \E[g_i(S'_i \setminus R_i)] \geq (1 - \beta)g_i(\emptyset) = (1 - \beta) f(R_i).
\]

Combining that with \cref{eq:op}, we get 
\begin{align*}
\E[f(S')] =& \E[\max(f(S'_1), f(S'_2))]
\\ \geq & \max(\E[f(S'_1)], \E[f(S'_2)])
\\ \geq & (1 - \beta) \max(f(R_1),f(R_2)) \\
\geq& \frac{(1 - \beta)\alpha}{2} \OPT.
\end{align*}
\end{proof}

In the offline setting, we can use the state-of-the-art $1/(4 + \epsilon)$-approximation algorithm
of \citet[Theorem 4.1]{Lee2010} as $\cA$. \Cref{approx-constant-violation} then follows from \cref{lem:violation-bound} and \cref{lem:approx-bound}.
\constantviolation*

In the streaming setting, we can use the state-of-the-art $1/7.464$-approximation algorithm of \citet[Theorem 19]{GargJS21} as $\cA$. \Cref{non-monotone-streaming} then also follows from \cref{lem:violation-bound} and \cref{lem:approx-bound}. 
\begin{theorem}\label{non-monotone-streaming}
There exists a polynomial-time two-pass streaming algorithm for non-monotone FMSM, which uses $O(k \cdot C)$ memory, and outputs a set $S$ such that $(i)$ $S \in \cI$, $(ii)$ $\floor{\beta\ell_c} \leq |V_c \cap  S| \le u_c$ for any color $c \in [C]$, and $(iii)$ $\E[f(S)] \ge  (1 - \beta) \mathrm{OPT}/14.928$ for any $\beta \in [0, 1/2]$. 
\end{theorem}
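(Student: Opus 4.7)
The plan is to reuse the two-pass procedure of \cref{approx-constant-violation} verbatim: first run \firstpass on the stream to obtain a feasible set $S$, then run \secondpass on the second pass, with the subroutine $\cA$ instantiated as a \emph{streaming} (rather than offline) algorithm for non-monotone submodular maximization over the intersection of two matroids. The natural choice is the $1/7.464$-approximation streaming algorithm of \citet[Theorem 19]{GargJS21}; plugged into the ratio $(1-\beta)\alpha/2$ of \cref{lem:approx-bound}, it yields exactly $(1-\beta)/14.928$, matching the target in the statement.

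Properties $(i)$ and $(ii)$ will follow directly from \cref{lem:violation-bound}. Its proof only uses the structural facts that the output of $\cA$ lies in $\cI^C \cap \cI_i$ and that the post-processing loop on Line \ref{line:fill} respects the upper bounds while restoring at least $\floor{\beta \ell_c}$ elements of each color; neither point depends on whether $\cA$ is offline or streaming, so the lemma transfers verbatim. Similarly, property $(iii)$ will follow from \cref{lem:approx-bound} applied with $\alpha = 1/7.464$, since that lemma's proof only uses the $\alpha$-approximation guarantee of $\cA$ together with \cref{bounded-sampling} applied to the random partition of $S$ into $S_1$ and $S_2$; again, the computational model is irrelevant.

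The only genuinely new ingredient, relative to \cref{approx-constant-violation}, is the $O(k \cdot C)$ memory bound. \firstpass stores one independent set $I_c$ per color, each of cardinality at most $k$, for $O(k \cdot C)$ memory. In the second pass, the bookkeeping sets $S, S_1, S_2, R_1, R_2, S_1', S_2'$ all have size $O(k)$, and the two parallel instances of the Garg--Jordan--Svensson algorithm use memory $\poly(k)$ that is independent of $n$. Thus the first pass dominates, yielding $O(k \cdot C)$ overall. I do not foresee any substantive obstacle: everything reduces to invoking the two lemmas and bookkeeping the memory of the off-the-shelf streaming matroid-intersection routine.
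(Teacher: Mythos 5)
Your proposal matches the paper's proof exactly: the paper also obtains \cref{non-monotone-streaming} by instantiating $\cA$ in \secondpass with the $1/7.464$-approximation streaming algorithm of \citet[Theorem 19]{GargJS21} and invoking \cref{lem:violation-bound} and \cref{lem:approx-bound}, with the $O(k\cdot C)$ memory bound inherited from \firstpass. Correct and complete.
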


\section{Missing proofs}

In this section, we present the proofs that were omitted from the main text. We restate claims for convenience.

\subsection{Proofs of \cref{sec:rounding-hardness}}\label{sec:rounding-hardness-proofs}
\integralitygap*
\begin{proof}
We will encode bipartite perfect matching as an intersection of matroid and fairness constraints.
Given a bipartite graph $(A \cup B, E)$, we set up a partition matroid to enforce that every vertex in $A$ has at most one adjacent edge in the solution, and fairness constraints to enforce that every vertex in $B$ has exactly one adjacent edge in the solution (we assign color $b$ to every edge $(a,b)$).
We then have $k = |A| = |B| = C$.

To define an instance, we then need to describe a bipartite graph and a monotone submodular function defined on its edges.
For parameters $t, s \ge 1$ we define the graph as follows:
there are two special vertices $1$ and $2$;
and
for $i=1,...,t$ we add a path from $1$ to $2$ of length $2s+1$ to the graph (thus inserting $2s$ new vertices).
Note that this graph is bipartite.
Let $O_i$ be the set of the $s+1$ ``odd" edges on the $i$-th path.
We define the submodular objective function as $f(S) = \sum_{i=1}^t \1_{S \cap O_i \ne \emptyset}$.

Note that the graph has exactly $t$ perfect matchings.
Indeed, the choice of mate for vertex $1$ determines the entire perfect matching, which must then be the union of $O_i$ (for some $i$) and the even edges from all the other paths.
In particular, for each $i$ there is only one perfect matching that intersects $O_i$.
Therefore the integral optimum is $1$.

However, if we take $x$ to be the average of the indicator vectors of the $t$ matchings,
then we get $x \in P_\cF$ and
\[
F(x) = \sum_{i=1}^t \left(1 - \prod_{e \in O_i} (1-x_e) \right) = t \cdot \left( 1 - \left(1 - \frac{1}{t} \right)^{s+1} \right) \,.
\]
If we set $s=t$, then $F(x) = \Omega(t)$, which is $\Omega(\sqrt{n})$ as $n = |E| = \Theta(s \cdot t)$.
\end{proof}

\subsection{Proofs of \cref{sec:expected-fairness}}\label{sec:expected-fairness-proofs}

We start by giving the full proof of \cref{approx-violated-fairness}. 

\expectedfairness*
\begin{proof}
We first obtain an $\alpha$-approximation fractional solution $x \in P_\cF$ for $\max_{x \in P_\cF} F(x)$. For monotone objectives, we can use the continuous greedy algorithm of \citet[Section 3.1 and Appendix A]{Calinescu2011}, which achieves a $(1 - 1/e)$-approximation for maximizing the multilinear extension over any integral polytope, both in expectation and with high probability.\textsuperscript{\ref{rmk-contgreedy}}
Note that $P_\cF$ is integral as it is the convex hull of integral points.

For non-monotone objectives, we can use the Frank-Wolfe type algorithm of \citet[Sections 3.5 and 4.5]{du2022lyapunov} which achieves a $(1 - \min_{x \in  P} \| x \|_\infty - \epsilon)/4$-approximation for maximizing the multilinear extension over any polytope $P$, both in expectation and with high probability --- see also \citep[Section 3]{Mualem2023} for an explicit variant of the algorithm and its analysis. Applying this algorithm to Problem \eqref{cont-relaxation} then yields a $(1 - \min_{x \in  P_\cF} \| x \|_\infty - \epsilon)/4$-approximation. 

To obtain the potentially better approximation ratio $(1 -  \min_{x \in \1 - P_\cF}  \| x\|_\infty - \epsilon)/4$, we solve the complement of Problem \eqref{cont-relaxation}: $\max_{x \in \1 - P_\cF} \bar{F}(x)$, where $\bar{F}$ is the multilinear extension of the complement $\bar{f}$ of $f$; $\bar{f}(S) = f(V \setminus S)$. 
It is easy to verify that the complement of a submodular function is also submodular. So we can indeed  use the Frank-Wolfe type algorithm on the complement problem, which yields $(1 -  \min_{x \in  \1 - P_\cF} \|x\|_\infty - \epsilon)/4$ approximation for it. 
Given an $\alpha$-approximate solution $\bar{x}$ for the complement problem, $1 - \bar{x}$ is an $\alpha$-approximate solution to the original problem. To see this, we show that $\bar{F}(x) = F(\1 - x)$:

\begin{align*}
    \bar{F}(x) &= \sum_{S \subseteq V} \bar{f}(S) \prod_{i \in S} x_i \prod_{j \in V \setminus S} (1 - x_j) \\
    &= \sum_{S' \subseteq V} {f}(S') \prod_{i \in V \setminus S'} x_i \prod_{j \in S'} (1 - x_j) \\ 
    &= F(\1 - x).
    \vspace{-10pt}
\end{align*}

Hence,  $\max_{x \in P_\cF} F(x) = \max_{x \in P_\cF} \bar{F}(\1 - x) = \max_{x' \in \1 - P_\cF} \bar{F}(x')$. Running the Frank-Wolfe type algorithm on both the original problem and its complement and picking the better solution then yields a $(1 -  \min_{x \in P_\cF \cup \1 - P_\cF}  \| x\|_\infty - \epsilon)/4$-approximation to Problem \eqref{cont-relaxation}.

Both the continuous greedy and the Frank-Wolfe type algorithms run in polynomial time if the constraint polytope is solvable,
which is the case for $P_\cF$ and $\1 - P_\cF$ by \cref{polytopes-properties}-\ref{linear-opt}.

Next we round the fractional solution $x$ to an independent (but not necessarily feasible) set $S \in \cI$ using the randomized swap rounding scheme of \citet[Section IV]{Chekuri2010}. The rounded solution is guaranteed to satisfy (see Theorem II.1 therein) $\E[|S \cap V_c|] = x(V_c) \in [\ell_c, u_c]$,  $ \E[ f(S)] \geq F(x) \geq \alpha \OPT$, and 
for any $c \in [C]$, $\delta_1 \geq 0$, and $\delta_2 \in [0,1]$,
\begin{align*}
\prob{|S \cap V_c| \geq (1 + \delta_1) u_c} &\leq \left(\frac{e^{\delta_1}}{(1 + \delta_1)^{( 1+ \delta_1)}}\right)^{u_c}\\
&\leq e^{-0.38 u_c \delta_1^2} & \text{ by \citep[Lemma 15]{CelisHV18}}\\ \\
\text{ and } \prob{|S \cap V_c| \leq (1 - \delta_2) \ell_c} &\leq e^{-\ell_c \delta_2^2 /2}.
\end{align*}
Then, by choosing $\delta_1 = \sqrt{3\ln(2 C) / u_c}$ and $\delta_2 = \sqrt{3 \ln(2 C) / \ell_c}$,
we get by union bound that $S$ satisfies $|S \cap V_c| \in [(1 - \sqrt{3\ln(2 C)/{\ell_c}}) \ell_c, (1 + \sqrt{3\ln(2 C)/{u_c}}) u_c)]$ for all $c \in [C]$, with probability at least $1 - 1/(2C)^{1.14} - 1/(2C)^{1.5} \ge 1 - 1/2^{1.14} - 1/2^{1.5} >  0.19$.
\begin{remark}
If $f$ is monotone, we can further guarantee that for any $\delta>0$, the solution in \cref{approx-violated-fairness} satisfies with constant probability the following  for all $c \in [C],$
\begin{equation*}
\left(1 -  \sqrt{\tfrac{3 \ln(2 C / (1 - e^{-\delta^2/16}))}{\ell_c}}\right) \ell_c \leq |S \cap V_c| \leq \left(1 + \sqrt{ \tfrac{3 \ln(2 C / (1 - e^{-\delta^2/16}))}{u_c}}\right) u_c
\end{equation*}
and $f(S) \geq (1 - 1/e - \delta) \OPT$.
\end{remark}
\begin{proof}
If $f$ is monotone, we can apply the lower-tail
concentration bound in \citep[Theorem II.2]{Chekuri2010}. To that end, we assume without loss of generality that for any $i \in V$, there exists a feasible set $S' \in \cF$ containing $i$, otherwise we can simply ignore such element. Note that we can easily check if an element $i$ satisfies this assumption, by maximizing a linear function over $\cF$, which assigns zero weights to all elements except $i$ and a non-zero weight to $i$. This can be done efficiently by \cref{polytopes-properties}-\ref{linear-opt}.
Under this assumption, the marginal values of $f$ are in $[0, \OPT]$, since for any $i \in V$, we have $f(i \mid S) \leq f(i) \leq f(S') \leq \OPT$ by submodularity and monotonicity. 
Scaling $f$ by $1/\OPT$, we obtain 
\begin{align*}
    \prob{f(S) \leq (1 - \delta) F(x)} &\leq e^{- F(x) \delta^2/(8 OPT)} \\
    &\leq e^{-\delta^2/16},
\end{align*}
for any $\delta>0$, since $F(x) \geq (1 - 1/e)  \OPT$ with high probability.
Hence, $S$ is guaranteed to satisfy $f(S) \geq (1 - \delta) F(x)  \geq (1 - 1/e - \delta) \OPT$ with probability $1 - e^{-\delta^2/16}$.

Then we again use the concentration bounds in \citep[Theorem II.1]{Chekuri2010} but now choosing $\delta_1 = \sqrt{\tfrac{3 \ln({2 C}/{(1 - e^{-\delta^2/16})})}{ u_c}}$ and $\delta_2 = \sqrt{\tfrac{3 \ln({2 C}/{(1 - e^{-\delta^2/16})})}{ \ell_c}}$. We get by union bound that $S$ satisfies all the bounds in the claim with probability at least $(1 - e^{-\delta^2/16} ) (1 - 1/(2C)^{1.14} - 1/(2C)^{1.5}) > 0.19 (1 - e^{-\delta^2/16})$.
\end{proof}

\mtodo{Some omitted details to add if reviewers ask for more details about time complexity: Continuous greedy returns a solution $x$ such that $F(x) \geq (1 - 1/e) \OPT$ with probability at least $1 - 1/(2n^2)$ (this is the probability that the estimates of the marginals gains are ``good", see Lemma 3.2 in \cite{Calinescu2011}. Even the ``clean'' approximation in Appendix A is given with high probability not deterministically). This needs $O(k^2 n)$ iterations where $k$ is the cardinality of the vertices of $P_\cF$, which is bounded by the rank of the matroid. And in each iteration it uses $O(n \log n ~ T^2)$ samples to estimate the marginals, where $T$ is the number of iterations. So the total complexity is $\tilde{O}(k^4 n^3)$. However this can be improved to $O(n^{5/3}/\epsilon^3)$ \citep[Corollary 13]{Mokhtari2020} or $O(n^2/\epsilon^4)$ \citep[Theorem 1.3]{Badanidiyuru2014a} which both achieve $(1 - 1/e) \OPT - \epsilon$ using variants of continuous greedy.
FW needs $O(\tfrac{D^2 L}{\epsilon F(x^*)})$ iterations, where $D$ is the diameter of $P_\cF$ in $\ell_2$-norm so $D \leq \sqrt{n}$ and $L$ is the smoothness of the multilinear extension so $L \leq n \max_{i} (F(i) - F(i | V \setminus i))$ (see for example \citep[Lemma C.1]{Hassani2017} - the proof therein is for monotone but it can be easily adapted to non-monotone). So the complexity of the algorithm is $O(\tfrac{n^2}{\epsilon \OPT})$ as long as $\OPT$ is not exponentially small the algorithm will finish in polynomial time.
Both  \cite{du2022lyapunov} and  \cite{Mualem2023} assume the gradient is computed exactly, which requires exponential time for $F$, but we can estimate it using similar strategy as in continuous greedy, the same result should hold but now with high probability and higher complexity.
}
\end{proof}

Next, we prove the statements of \cref{packing-nbr-unif} and \cref{packing-nbr-matching}.

\unifmatroidex*
\begin{proof}
First we compute $\min_{x \in  P_\cF} \| x \|_\infty$.
For any $x \in P_{\cF}$, we have $\ell_c \leq \sum_{i \in V_c} x_i \leq \|x\|_\infty |V_c|$ for all $c \in [C]$, hence $\|x\|_\infty \geq \max_c \tfrac{\ell_c}{|V_c|}$. We show that this lower bound is achieved at a feasible vector $x \in P_\cF$.
Let $x \in [0,1]^n$ such that $x_i  = \frac{\ell_c}{|V_c|}$ for all $i \in V_c$ and $c \in [C]$, then $\|x\|_\infty = \max_c \tfrac{\ell_c}{|V_c|}$ and $x \in P_\cF$ since $\sum_{i \in V_c} x_i = \ell_c \in [\ell_c, u_c]$ and $\sum_{i \in V} x_i = \sum_c \ell_c \leq k$ by the assumption that $\cF \not = \emptyset$. Hence $\min_{x \in  P_\cF} \| x \|_\infty = \max_c \tfrac{\ell_c}{|V_c|}$.

Next we compute $\min_{x \in \1 - P_\cF} \| x\|_\infty = \min_{x \in P_\cF} \|  \1 - x\|_\infty = 1 - \max_{x \in P_\cF} \min_{i \in V} x_i$. 
Let $x \in [0,1]^n$ such that 
$$x_i  =
\begin{cases}
 \tfrac{\ell_c}{|V_c|} &\text{for all $i \in V_c, c > t$}\\
\min\{\tau, \tfrac{u_c}{|V_c|}\} &\text{for all $i \in V_c, c \leq t$}.
\end{cases}
$$
We argue that $x \in P_\cF$. We have $x(V_c)  = \ell_c  \in [\ell_c, u_c]$ for all $c > t$ and  $x(V_c)  = \min\{ u_c, \tau |V_c| \}$ for all $c \leq t$. By definition of $t$ and $\tau$, we have $\tau \geq \tfrac{\ell_{t}}{|V_{t}|}$. Hence $x(V_c) \geq \min\{u_c, \tfrac{\ell_{t}}{|V_{t}|} |V_c| \} \geq \ell_c$ for all $c \leq t$. By definition of $\tau$, we also have $x(V) \leq \tau \sum_{c = 1}^{t} |V_c| + \sum_{c= t+1}^C \ell_c = k$.
We show next that $\min_{i \in V} x_i = \min\{\tau, \min_{c \leq t} \tfrac{u_c}{|V_c|}\}$. This holds trivially if $t = C$. Otherwise, by definition of $t < C$, we have  $$k \leq \tfrac{\ell_{t+1}}{|V_{t+1}|} \sum_{c = 1}^{t+1} |V_c| + \sum_{c= t+2}^C \ell_c = \tfrac{\ell_{t+1}}{|V_{t+1}|} \sum_{c = 1}^{t} |V_c| + \sum_{c= t+1}^C \ell_c, $$
hence $\tau \leq \tfrac{\ell_{t+1}}{|V_{t+1}|} = \min_{i \in \cup_{c > t} V_c} x_i$.
Thus $\min_{i \in V} x_i$ is attained at some color $c \le t$.
This proves that $\max_{x \in P_\cF} \min_{i \in V} x_i \geq \min\{\tau, \min_{c \leq t} \tfrac{u_c}{|V_c|}\}$.
To prove the upper bound, we observe that for any $x' \in P_\cF$, we have $\min_{i \in V} x'_i |V_c| \leq x'(V_c) \leq u_c$
for every $c \in [C]$
and $k \geq x' (V) \geq \sum_{c = t+1}^C \ell_c + \min_{i \in V} x'_i \sum_{c = 1}^{t} |V_c|$. Thus by definition of $\tau$, we get
$$\min_{i \in V} x'_i \leq \min\{\tau, \min_{c \in [C]} \tfrac{u_c}{|V_c|}\} \leq \min\{\tau, \min_{c \leq t} \tfrac{u_c}{|V_c|}\}.$$ 
Hence, $\min_{x \in \1 - P_\cF} \| x\|_\infty = 1 - \min\{\tau, \min_{c \leq t} \tfrac{u_c}{|V_c|}\}$ achieved at $\bar{x} = \1 - x$.
\end{proof}

\matchingex*
\begin{proof}
We prove that $\min_{x \in  P_\cF  \cup (\1 - P_\cF)} \| x \|_\infty= 1 - 1/t$ (recall that $t = \Theta(\sqrt{n})$ in \cref{integrality-gap}).
The point $x$ used in the proof of \cref{integrality-gap} puts $1/t$ value on the ``odd'' edges, and $1-1/t$ value on the ``even" edges (each of which belongs to all perfect matchings but one). Thus it has $\|x\|_\infty = \|\1 - x \|_\infty = 1-1/t$. This proves the $\le$ direction. For the $\ge$ direction, consider any $x \in P_\cF$. The point $x$ must satisfy $\sum_{e \in \delta(1)} x_e = 1$, where $\delta(1)$ is the set of edges incident on the special vertex $1$. As $|\delta(1)| = t$, we must have $x_e \le 1/t$ for some $e \in \delta(1)$. Then consider the path from vertex $1$ to $2$ containing $e$, and let $e'$ be the next edge after $e$ on that path. The common endpoint of $e$ and $e'$ has degree two, thus $x_{e'} = 1-x_e \ge 1-1/t$. Hence $\min\{ \|x\|_\infty, \|\1 - x\|_\infty\} \ge 1-1/t$.
\end{proof}

\subsection{Proofs of \cref{sec:uniform-matroid}} \label{sec:uniform-matroid-proofs}

In this section, we expand on some details that were omitted from the proof of \cref{approx-unif}.

\approxunif*
\begin{proof}

\citet[Theorem 5.2]{HalabiMNTT20} presented a polynomial-time algorithm with an expected $(1 - \max_c \tfrac{\ell_c}{|V_c|}) \alpha$-approximation for non-monotone FMSM over a uniform matroid, given any  $\alpha$-approximation algorithm for non-monotone submodular maximization over a matroid constraint.\textsuperscript{\ref{rmk:nonmonotone-unif}}
Plugging in the state-of-the-art $0.401$-approximation of \citet[Theorem 1.1]{Buchbinder2019} then yields a $0.401 (1 - \max_c \tfrac{\ell_c}{|V_c|}) $-approximation in expectation. 
Recall from \cref{packing-nbr-unif} that $\min_{x \in  P_\cF } \| x \|_\infty =  \max_c \tfrac{\ell_c}{|V_c|}$.

To obtain the potentially better approximation factor $0.401 (1 - \min_{x \in \1 - P_\cF} \|x\|_\infty)$, we solve (as in \cref{approx-violated-fairness}) the complement problem $\max_{V \setminus S \in \cF} \bar{f}(S)$, where $\bar{f}(S) = f(V \setminus S)$.  
We follow a similar strategy as \citet{HalabiMNTT20}. Namely, we first drop the lower bounds from the constraint. The resulting problem  $\max_{S \subseteq V} \{ \bar{f}(S) : |S \cap V_c| \leq |V_c| - \ell_c, ~ \forall c \in [C]\}$ is a non-monotone submodular maximization problem over a partition matroid. Hence, a solution $S$ with an expected $0.401$-approximation can be obtained for it.

Next, we augment the solution $S$ to a feasible one that we will denote $S^+$. 
Let $\bar{x}$ be a solution of  $\min_{x \in \1 - P_\cF} \|x\|_\infty$, which has a closed form as shown in \cref{packing-nbr-unif}.
For each color $c$, we sample a set $B_c \subseteq V_c$ of elements so that:
\begin{enumerate}
    \item $\prob{e \in B_c} = \bar{x}_e$ for all $e \in V_c$,
    \item $|B_c|$ is $\floor{\bar{x}(V_c)}$ or $\ceil{\bar{x}(V_c)}$ for all $c \in [C]$,
    \item $\sum_c |B_c|$ is $\floor{\bar{x}(V)}$ or $\ceil{\bar{x}(V)}$.
\end{enumerate}

This can be done as follows, inspired by \citep[Section 2.1]{Hartley1962}.
Assign consecutive disjoint intervals on the real line to elements, of length $\bar{x}_e$ for each $e \in V$, grouped by colors;
e.g., if $V_1 = \{e_1, e_2, ...\}$, the first two intervals would be $[0,\bar{x}_{e_1})$ and $[\bar{x}_{e_1},\bar{x}_{e_1}+\bar{x}_{e_2})$.
Now sample a random offset $\alpha \in [0,1)$, and define $B_c$ to be those elements of $V_c$ whose interval contains a point in $\mathbb{Z} + \alpha$.
Property 1 follows directly. For property 2, note that the union of intervals of all $e \in V_c$ is also an interval (as elements of the same color are grouped together), of length $\bar{x}(V_c)$; its intersection with $\mathbb{Z} + \alpha$ must thus be of size either $\floor{\bar{x}(V_c)}$ or $\ceil{\bar{x}(V_c)}$,
and every point in this intersection adds exactly one new element to $B_c$ since $\bar{x}_e \le 1$ for all $e$. Property 3 follows likewise.

Then, for every color such that $|S \cap V_c| < |B_c|$, add any $|B_c| - |S \cap V_c|$ elements from $B_c \setminus S$ to $S$.
This obtains $S^+$.
We argue that the augmented set $S^+$ is feasible
(for the complement problem $\max_{V \setminus S \in \cF} \bar{f}(S)$).
Note that $|B_c| \in \{\floor{\bar{x}(V_c)}, \ceil{\bar{x}(V_c)} \} \subseteq [|V_c| - u_c, |V_c| - \ell_c]$, since $\bar{x} \in \1 - P_\cF$ and the bounds are integral.
For the lower bounds, we have $|S^+ \cap V_c| \ge |B_c| \ge |V_c| - u_c$.
For the upper bounds, note that $|S^+ \cap V_c| = \max\{ |S \cap V_c|, |B_c| \} \leq  |V_c| - \ell_c$, since by definition of $S$, $|S \cap V_c| \le |V_c| - \ell_c$.
Moreover, $|S^+| \ge \sum_c |B_c| \ge \floor{\bar{x}(V)} \ge n-k$, since $\bar{x} \in \1 - P_\cF$ and $n-k$ is an integer.

To bound the loss in value resulting from the additional elements, we make use of \cref{bounded-sampling}.
Note that for any $e \in V_c$, $\prob{e \in S^+ \setminus S} \le \prob{e \in B_c} = \bar{x}_e \le \|\bar{x}\|_\infty$.
\Cref{bounded-sampling} and the definitions of $S$ and $\bar{x}$ then imply that $$\E[\bar{f}(S^+)] \geq (1 -  \|\bar{x} \|_\infty) \E[\bar{f}(S)] \geq 0.401 (1 - \min_{x \in \1 - P_\cF} \|x\|_\infty) \max_{V \setminus S \in \cF} \bar{f}(S).$$ 
Finally, taking the complement $V \setminus S^+$ yields a $0.401 (1 - \min_{x \in \1 - P_\cF} \|x\|_\infty)$-approximation to the original problem, in expectation.
\end{proof}

\subsection{Proofs of \cref{sec:non-monotone-hardness}}\label{sec:non-monotone-hardness-proofs}
 In this section, we give the full proof of \cref{non-monotone-hardness}.
\nonmonotonehard*
\begin{proof}
\citet[Theorem 1.2, 1.9 and Section 2]{Vondrak2013} showed that any algorithm achieving a better than $(1 - r)$-approximation for the problem of maximizing a non-monotone submodular function over the \emph{bases} of a matroid $\cI$, where $\min_{x \in P_\cI : \sum_i x_i = k} \|x\|_\infty \leq r$, requires exponentially many value queries. This holds  also for the continuous relaxation of the problem (see Theorem 1.9 and Section 2 therein). 

We show that the hard instance used in the proofs of the lower bounds in \cite{Vondrak2013} is a special case of FMSM with $  \min_{x \in  P_\cF  \cup (\1 - P_\cF)} \| x \|_\infty\leq r$.
In particular, the hard instance uses a ground set $V = A \cup B$, where $A$ and $B$ are two disjoint sets, each of size $t \times m$, for some large number $m$, and a partition matroid base constraint $\mathcal{B} = \{S : |S \cap A| = m \text{ and }|S \cap B|= (t-1) m\}$ (see Section 2, Theorems 1.8 and 1.9 therein). 

We can express this matroid base constraint via $\cF$ in two ways. First, any matroid base constraint is a special case of $\cF$ where $C=1$, $V = V_1$, and $\ell_1 = u_1 = k$ (the rank of the matroid), i.e., the fairness constraint reduces to $|S| = k$. In the case of~$\mathcal{B}$, we have $k = t \times m$ and $\min_{x \in  P_\cF  \cup (\1 - P_\cF)} \| x \|_\infty= \min_{x \in P_\cI : \sum_i x_i = k} \|x\|_\infty =  1 - \tfrac{1}{t}$. 
Alternatively, we can express $\mathcal{B}$ with a fairness constraint alone, with two color groups $V_1 = A$, $V_2 = B$, $\ell_1 = u_1 = m$, and $\ell_2 = u_2 = (t-1) m$. 
\end{proof}

\subsection{Proofs of \cref{sec:decomposable}}\label{sec:decomposable-proofs}

In this section, we prove \cref{unif-maximal-sets} and \cref{swap-rounding}.

\unifmaximalsets*
\begin{proof}
Note that any fair set is a maximal set of the $k$-truncation of $\cC$ given by $\cC_k:= \{S \in \cC \mid |S| \leq k\}$ with $k = |S|$. The family $\cC_k$ is a special case of $\cF$ where $\cI$ is the $k$-uniform matroid, so the corresponding family of extendable set $\tilde{\cC}_k$ is a matroid \citep[Lemma 4.1]{HalabiMNTT20}.
The claim then follows from \cref{maximal-sets}.
\end{proof}

\swaprounding*
\begin{proof}
The randomized rounding scheme of \citet[Section V]{Chekuri2010} takes as input  a fractional solution $x' \in P_{\cI_1} \cap P_{\cI_2}$, where $\cI_1$ and $\cI_2$ are two matroids, expressed as 
a convex combination of sets in $\cI_1\cap \cI_2$, 
and returns a set $S' \in \cI_1 \cap \cI_2$ such that $\E[f'(S')] \geq F'(x')$ for any submodular function $f'$ which decomposes over the equivalence classes of $\cI_1$ or $\cI_2$, 
i.e., $f'(S) = \sum_{G \in \cG_1} f'_{1, G}(S \cap G) + \sum_{G \in \cG_2} f'_{2, G}(S \cap G),$ where $f'_{1, G}, f'_{2, G}$ are submodular functions and $\cG_1, \cG_2$ are the respective families of equivalence classes of $\cI_1$ and $\cI_2$ (see Theorem II.3 therein). If the sets in the input convex combination are all of the same size $k$, then the returned set will have size $k$ too\footnote{This is not explicitly stated in \citep[Section V]{Chekuri2010}, but can be easily deduced from the description of the merging operation therein.}.

Given $x \in P_\cF$, since $P_\cF$ is solvable (\cref{polytopes-properties}-\ref{linear-opt}), we can write $x$ as a convex combination of sets in $\cF$ in strongly polynomial time \citep[Theorem 5.15]{Schrijver2003}; i.e.,  $x = \sum_t \alpha_t \1_{I_t}$ for some  $I_t \in \cF, \alpha_t \geq 0, \sum_{t} \alpha_t = 1$.

Let $k$ be the size of the largest set $I_t$. 
\Cref{unif-maximal-sets} implies that every set $I_t$ is in the intersection of the matroids $\cI$ and $\tilde{\cC}_{k}$, where $\tilde{\cC}_{k}$ is defined as in the lemma.
Then $x$ is already a valid input for swap rounding. If all sets $I_t$ have the same size, then the returned rounded solution $S \in \cI \cap \tilde{\cC}_{k}$ is a base of $\tilde{\cC}_{k}$, and thus also a fair set by \cref{unif-maximal-sets}. 
So $S$ is feasible. Otherwise, $S$ is not necessarily feasible. In that case, we first need to  
add  dummy elements to make all sets $I_t$ of equal size. 

Let $E$ be a set of $k$ new dummy elements (not in $V$). For every $t$, we add enough dummy elements from $E$ to $I_t$ to obtain a set $I'_t$ of size $k$. We also extend the matroid $\cI$ to $\cI^+ = \{ S \subseteq V \cup E \mid S \setminus E \in \cI\}$ and the collection of fair sets $\cC$ to $\cC^+= \{ S \subseteq V \cup E \mid S \setminus E \in \cC\}$. It is easy to verify that $\cI^+$ is also a matroid. Observe that $\cC^+$ is also a collection of fair sets over the extended ground set, since $\cC^+= \{ S \subseteq V \cup E \mid |S \cap V_c| \in [\ell_c, u_c] ~\forall c \in [C], |S \cap E| \in [0, |E|]\}$, so \cref{unif-maximal-sets} applies to it. Moreover, $\tilde{\cC}^+_{k} =\{S \subseteq V \cup E \mid  \text{ there exists } S' \in \cC^+, |S'| \leq k \text{ such that } S \subseteq S' \}$ is a matroid 
of rank $k$, since any set in $\tilde{\cC}^+_{k}$ has size at most $k$ and there exists $I_t \in \tilde{\cC}^+_{k}$ with size $k$ for some~$t$. 

Let $x' = \sum_t \alpha_t \1_{I'_t}$. Since every set $I'_t$ is in the intersection of the matroids $\cI^+$ and $\tilde{\cC}^+_{k}$, $x'$ is a valid input to swap rounding. And since all $I'_t$ have size~$k$, the returned rounded solution $S' \in \cI^+ \cap \tilde{\cC}^+_{k}$ will also have size $k$. Hence, $S'$ is a base of $\tilde{\cC}^+_{k}$, and thus $S' \in \cC^+$ by \cref{unif-maximal-sets}.
Removing the dummy elements from $S'$ then yields a feasible solution $S = S' \setminus E \in \cI \cap \cC$.

It remains to show that the solution $S$ preserves the value of $F(x)$ for any decomposable submodular function $f$ over $\cF$. Define $f':2^{V\cup E} \to \R_+$ as $f'(S) = f(S \setminus E)$, then its multilinear extension $F': [0,1]^{n+k} \to \R_+$  is given by $F'(x') = F(x'_V)$ where $x'_V \in [0,1]^n$ is the vector corresponding to the entries of $x'$ in $V$.
We observe that $f'$ decomposes over the equivalence classes of $\cI^+$ or $\tilde{\cC}^+_{k}$, with $f'_{1, G}(S) = f_{1, G}(S \setminus E)$ and $f'_{2, c}(S) = f_{2, c}(S \setminus E)$, where $f_{1, G}, f_{2, c}$ are the functions in the decomposition of $f$ (\cref{decomposable-fcts}). One can verify that $f', f'_{1, G}, f'_{2, c}$ are submodular since $f, f_{1, G}, f_{2, c}$ are submodular.
The solution $S$ returned by swap rounding then satisfies $\E[f'(S')] = \E[f(S)] \geq F'(x') = F(x)$.
\end{proof}

It is worth mentioning that \citet[Appendix C.2]{el2023fairness} have already shown a reduction of FMSM to submodular maximization over the intersection of two matroid bases. However, their result cannot be used to prove \cref{swap-rounding}, since the matroids in their reduction have different equivalence classes than the ones of $\cI$ and the color groups. Using their reduction in \cref{swap-rounding} would require considering a much more restricted class of decomposable submodular functions (ones which are completely separable over $V$, or over subgroups of colors obtained by splitting each color group into two).


\section{Relation to multi-objective submodular maximization}\label{sec:multiobj-appendix}

In this section, we discuss the connection of FMSM to multi-objective submodular maximization, where given $m$ submodular functions $f_i$, the goal is to maximize their minimum $\min_{i \in [m]} f_i(S)$.

 We can write FMSM as a multi-objective submodular maximization problem over two matroids:
 \begin{equation}\label{eq:FMSM-multiobj}
     \max_{S \subseteq V} \{ \min_{i \in [C+1]} f_i(S) : S \in \mathcal{I}, |S \cap V_c | \leq u_c \quad  \forall c \in [C] \},
 \end{equation}
where $f_{C+1} = f / \OPT$ and $f_c = |S \cap V_c| /\ell_c$ for all $c \in [C]$.
 Recall that $\OPT$ is the optimal value of FMSM, which can be guessed. 
 A solution $S$ of Problem \eqref{eq:FMSM-multiobj} is then also an optimal solution to FMSM.
To the best of our knowledge, there are no existing work on multi-objective submodular maximization over two matroids.

 In the special cases where there are no upper bounds ($u_c = |V_c|$) or $\mathcal{I}$ is the uniform matroid, and $f$ is monotone, one can apply the result of \citep[Theorem 7.2]{Chekuri2009} on multi-objective submodular maximization over a single matroid to obtain a $(1 - 1/e)$-approximation for monotone FMSM.
But the resulting runtime will be exponential in $\sum_c \ell_c$ as the algorithm requires guessing the $\ell_c$ elements belonging to the optimal solution for each group $c$, by brute force enumeration.
 Recall though that when $\mathcal{I}$ is the uniform matroid, a polynomial time $(1 - 1/e)$-approximation for monotone FMSM was provided in \citep[Theorem 18]{CelisHV18} under the assumption that the groups are disjoint, which is not assumed in \citep{Chekuri2010}. \citet{CelisHV18} also provided a $(1 - 1/e)$-approximation algorithm for the overlapping groups case, but its runtime is also exponential in $C$ (see Theorem 20 therein).
 
 Finally, if there are no upper bounds ($u_c = |V_c|$), $\mathcal{I}$ is the uniform matroid, and $f$ is monotone, then the constraint in Problem \eqref{eq:FMSM-multiobj} becomes a simple cardinality constraint. In this case, a $(1 - 1/e - o(1))$-approximation for monotone FMSM can be obtained
 in polynomial-time if $k = o(n)$, even with overlapping groups \cite{ Udwani2018,TsangWRTZ19}.
 
 The line of work \cite{Wang2024, TsangWRTZ19, tang2023beyond} which studied submodular maximization under a different notion of fairness can  all be formulated as a monotone multi-objective submodular maximization problem over a cardinality constraint.

\end{document}